\newtheorem{rmk}{Definition}
\definecolor{Gray}{gray}{0.85}
\definecolor{LightCyan}{rgb}{0.88,1,1}
\definecolor{DarkYellow}{rgb}{0.88,1,1}
\definecolor{LightYellow}{rgb}{0.88,1,1}
\definecolor{chartreuse(traditional)}{rgb}{0.87, 1.0, 0.0}
\newcolumntype{a}{>{\columncolor{Gray}}c}
\newcolumntype{b}{>{\columncolor{white}}c}
\newcolumntype{a}{>{\columncolor{red}}c}
\newcolumntype{b}{>{\columncolor{orange}}c}
\newcolumntype{d}{>{\columncolor{yellow}}c}
\newcolumntype{e}{>{\columncolor{chartreuse(traditional)}}c}
\newcolumntype{f}{>{\columncolor{green}}c}
\definecolor{newcolor}{rgb}{.8,.349,.1}
\title{Toward a generic representation of random variables for machine learning}
\author{
Gautier Marti \\
Hellebore Capital Management\\
63, Avenue des Champs-Elys\'ees\\
Paris, 75008 \\
\texttt{gmarti@helleborecapital.com} \\
\And
Philippe Very \\
Hellebore Capital Management \\
63, Avenue des Champs-Elys\'ees\\
Paris, 75008 \\
\texttt{pvery@helleborecapital.com} \\
\AND
Philippe Donnat \\
Hellebore Capital Management \\
63, Avenue des Champs-Elys\'ees\\
Paris, 75008 \\
\texttt{pdonnat@helleborecapital.com} \\
}
\begin{document}

\maketitle

\begin{abstract}
This paper presents a pre-processing and a distance which improve the performance of machine learning algorithms working on independent and identically distributed stochastic processes.
We introduce a novel non-parametric approach to represent random variables which splits apart dependency and distribution without losing any information.
We also propound an associated metric leveraging this representation and its statistical estimate.
Besides experiments on synthetic datasets, the benefits of our contribution is illustrated through the example of clustering financial time series, for instance prices from the credit default swaps market. Results are available on the website \url{www.datagrapple.com} and an IPython Notebook tutorial is available at \url{www.datagrapple.com/Tech} for reproducible research.
\end{abstract}

\section{Introduction}
Machine learning on time series is a booming field and as such plenty of representations, transformations, normalizations, metrics and other divergences are thrown at disposal to the practitioner.
A further consequence of the recent advances in time series mining is that it is difficult to have a sober look at the state of the art since many papers state contradictory claims as described in \citep{exp_keogh}.
To be fair, we should mention that when data, pre-processing steps, distances and algorithms are combined together, they have an intricate behaviour making it difficult to draw unanimous conclusions especially in a fast-paced environment.
Restricting the scope of time series to independent and identically distributed (i.i.d.) stochastic processes, we propound a method which, on the contrary to many of its counterparts, is mathematically grounded with respect to the clustering task defined in subsection~\ref{motivation}. The representation we present in Section \ref{copula_repr_section} exploits a property similar to the seminal result of copula theory, namely Sklar's theorem \citep{sklar_theo}. This approach leverages the specificities of random variables and this way solves several shortcomings of more classical data pre-processing and distances that will be detailed in subsection \ref{shortcomings_section}.
Section \ref{exp_section} is dedicated to experiments on synthetic and real datasets to illustrate the benefits of our method which relies on the hypothesis of i.i.d. sampling of the random variables. 
Synthetic time series are generated by a simple model yielding correlated random variables following different distributions.
The presented approach is also applied to financial time series from the credit default swaps market whose prices dynamics are usually modelled by random walks according to the efficient-market hypothesis \citep{citeulike:1571390}. This dataset seems more interesting than stocks as credit default swaps are often considered as a gauge of investors' fear, thus time series are subject to more violent moves and may provide more distributional information than the ones from the stock market.
We have made our detailed experiments (cf. Machine Tree on the website \url{www.datagrapple.com}) and Python code available (\url{www.datagrapple.com/Tech}) for reproducible research.
Finally, we conclude the paper with a discussion on the method and we propound future research directions.


\subsection{Motivation and goal of study}\label{motivation}

Machine learning methodology usually consists in several pre-processing steps aiming at cleaning data and preparing them for being fed to a battery of algorithms.
Data scientists have the daunting mission to choose the best possible combination of pre-processing, dissimilarity measure and algorithm to solve the task at hand among a profuse literature. In this article, we provide both a pre-processing and a distance for studying i.i.d. random processes which are compatible with basic machine learning algorithms.

Many statistical distances exist to measure the dissimilarity of two random variables, and therefore two i.i.d. random processes. Such distances can be roughly classified in two families:
\begin{enumerate}
\item distributional distances, for instance \citep{clust_proc}, \citep{KhaleghiRMP12} and \citep{henderson2015ep}, which focus on dissimilarity between probability distributions and quantify divergences in marginal behaviours,  
\item dependence distances, such as the distance correlation or copula-based kernel dependency measures \citep{copula_kernel}, which focus on the joint behaviours of random variables, generally ignoring their distribution properties.
\end{enumerate}
However, we may want to be able to discriminate random variables both on distribution and dependence. This can be motivated, for instance, from the study of financial assets returns: are two perfectly correlated random variables (assets returns), but one being normally distributed and the other one following a heavy-tailed distribution, similar? From risk perspective, the answer is no \citep{kelly2014tail}, hence the propounded distance of this article.
We illustrate its benefits through clustering, a machine learning task which primarily relies on the metric space considered (data representation and associated distance). Besides clustering has found application in finance, e.g. \citep{tola2008cluster}, which gives us a framework for benchmarking on real data.

Our objective is therefore to obtain a good clustering of random variables based on an appropriate and simple enough distance for being used with basic clustering algorithms, e.g. Ward hierarchical clustering \citep{Inchoate:Ward63}, $k$-means++ \citep{kmeanspp}, affinity propagation \citep{frey2007clustering}.

By clustering we mean the task of grouping sets of objects in such a way that objects in the same cluster are more similar to each other than those in different clusters. More specifically, a cluster of random variables should gather random variables with common dependence between them and with a common distribution. Two clusters should differ either in the dependency between their random variables or in their distributions.

A good clustering is a partition of the data that must be stable to small perturbations of the dataset. ``Stability of some kind is clearly a desirable property of clustering methods" \citep{ultrametric_hc}.
In the case of random variables, these small perturbations can be obtained from resampling \citep{levine2001resampling}, \citep{monti2003consensus}, \citep{lange2004stability} in the spirit of the bootstrap method since it preserves the statistical properties of the initial sample \citep{bootstrap_efron}.

Yet, practitioners and researchers pinpoint that state-of-the-art results of clustering methodology applied to financial times series are very sensitive to perturbations \citep{lemieux2014clustering}. The observed unstability may result from a poor representation of these time series, and thus clusters may not capture all the underlying information.

\subsection{Shortcomings of a standard machine learning approach} \label{shortcomings_section}

A naive but often used distance between random variables to measure similarity and to perform clustering is the $L_2$ distance $\mathbb{E}[ (X-Y)^2 ]$.
Yet, this distance is not suited to our task. 
\newtheorem{exmp}{Example}
\begin{exmp}[Distance $L_2$ between two Gaussians]
Let $(X,Y)$ be a bivariate Gaussian vector, with $X \sim \mathcal{N}(\mu_X,\sigma_X^2)$, $Y \sim \mathcal{N}(\mu_Y,\sigma_Y^2)$ and whose correlation is $\rho(X,Y) \in [-1,1]$. We obtain $\mathbb{E}[ (X-Y)^2 ] = (\mu_X - \mu_Y)^2 + (\sigma_X - \sigma_Y)^2 + 2\sigma_X \sigma_Y ( 1 - \rho(X,Y) )$. Now, consider the following values for correlation:
\begin{itemize}
\item $\rho(X,Y) = 0$, so $\mathbb{E}[ (X-Y)^2 ] = (\mu_X - \mu_Y)^2 + \sigma_X^2 + \sigma_Y^2$.
The two variables are independent (since uncorrelated and jointly normally distributed), thus we must discriminate on distribution information. Assume $\mu_X = \mu_Y$ and $\sigma_X = \sigma_Y$. For $\sigma_X = \sigma_Y \gg 1$, we obtain $\mathbb{E}[ (X-Y)^2 ] \gg 1$ instead of the distance $0$, expected from comparing two equal Gaussians.
\item $\rho(X,Y) = 1$, so $\mathbb{E}[ (X-Y)^2 ] = (\mu_X - \mu_Y)^2 + (\sigma_X - \sigma_Y)^2$.
Since the variables are perfectly correlated, we must discriminate on distributions. We actually compare them with a $L_2$ metric on the mean $\times$ standard deviation half-plane. However, this is not an appropriate geometry for comparing two Gaussians \citep{costa2014fisher}. For instance, if $\sigma_X = \sigma_Y = \sigma$, we find $\mathbb{E}[ (X-Y)^2 ] = (\mu_X - \mu_Y)^2$ for any values of $\sigma$. As $\sigma$ grows, probability attached by the two Gaussians to a given interval grows similar (cf. Fig.~\ref{fig:equidist_Gaussians}), yet this increasing similarity is not taken into account by this $L_2$ distance.
\begin{figure}
\vskip 0.2in
\begin{center}
\includegraphics[width=0.9\linewidth]{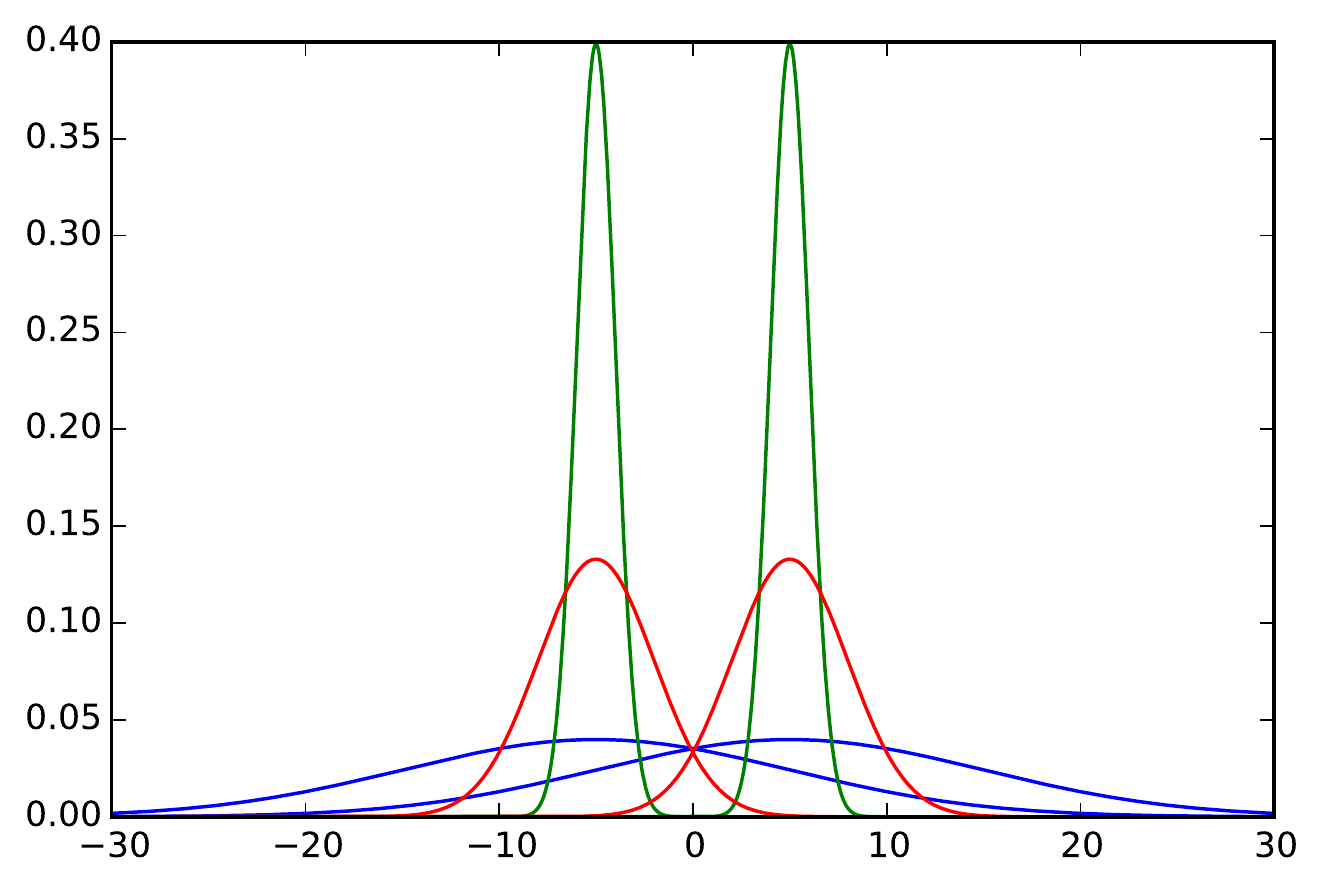}
\caption{Probability density functions of Gaussians $\mathcal{N}(-5,1)$ and $\mathcal{N}(5,1)$ (in green), Gaussians $\mathcal{N}(-5,3)$ and $\mathcal{N}(5,3)$ (in red), and Gaussians $\mathcal{N}(-5,10)$ and $\mathcal{N}(5,10)$ (in blue). Green, red and blue Gaussians are equidistant using $L_2$ geometry on the parameter space $(\mu,\sigma)$.}
\label{fig:equidist_Gaussians}
\end{center}
\vskip -0.2in
\end{figure}
\end{itemize}
\end{exmp}
$\mathbb{E}[(X-Y)^2]$ considers both dependence and distribution information of the random variables, but not in a relevant way with respect to our task. Yet, we will benchmark against this distance since other more sophisticated distances on time series such as dynamic time warping \citep{berndt1994using} and representations such as wavelets \citep{percival2006wavelet} or SAX \citep{lin2003symbolic} were explicitly designed to handle temporal patterns which are inexistant in i.i.d. random processes.

\section{A generic representation for random variables}\label{copula_repr_section}

Our purpose is to introduce a new data representation and a suitable distance which takes into account both distributional proximities and joint behaviours.

\newtheorem{thm}{Theorem}
\newtheorem{prop}[thm]{Property}

\subsection{A representation preserving total information}
Let $(\Omega,\mathcal{F},\mathbb{P})$ be a probability space. $\Omega$ is the sample space, $\mathcal{F}$ is the $\sigma$-algebra of events, and $\mathbb{P}$ is the probability measure.
Let $\mathcal{V}$ be the space of all continuous real-valued random variables defined on $(\Omega,\mathcal{F},\mathbb{P})$.
Let $\mathcal{U}$ be the space of random variables following a uniform distribution on $[0,1]$ and $\mathcal{G}$ be the space of absolutely continuous cumulative distribution functions (cdf).
\begin{rmk}[The copula transform]
Let $X = (X_1,\ldots,X_N) \in \mathcal{V}^N$ be a random vector with cdfs $G_X = (G_{X_1},\ldots,G_{X_N}) \in \mathcal{G}^N$. The random vector $G_X(X) = (G_{X_1}(X_1),\ldots,G_{X_N}(X_N)) \in \mathcal{U}^N$ is known as the copula transform.
\end{rmk}
\begin{prop}[Uniform margins of the copula transform]
$G_{X_i}(X_i)$, $1 \leq i \leq N$, are uniformly distributed on $[0,1]$.
\end{prop}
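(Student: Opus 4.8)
The plan is to recognize this as the classical probability integral transform and to prove it coordinate by coordinate. First I would fix $i$, lighten the notation by writing $X = X_i$, $G = G_{X_i}$, and $U = G(X)$, and reduce the claim to showing that $U$ has the cumulative distribution function of the uniform law on $[0,1]$, i.e. that $\mathbb{P}(U \leq u) = u$ for every $u \in [0,1]$ (the values outside $[0,1]$ being immediate since $U$ takes values in $[0,1]$).

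Next I would compute the cdf of $U$ directly by rewriting the event $\{G(X) \leq u\}$ in terms of $X$ alone. Introducing the generalized inverse $G^{-1}(u) = \inf\{x : G(x) \geq u\}$, the key step is the equivalence of $\{G(X) \leq u\}$ and $\{X \leq G^{-1}(u)\}$ up to a $\mathbb{P}$-null set, which gives $\mathbb{P}(U \leq u) = \mathbb{P}(X \leq G^{-1}(u)) = G(G^{-1}(u))$. The computation then closes as soon as one establishes $G(G^{-1}(u)) = u$.

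The crucial identity $G(G^{-1}(u)) = u$ is exactly where the hypotheses $X \in \mathcal{V}$ and $G \in \mathcal{G}$ enter: absolute continuity of $G$ forces $G$ to be continuous, and continuity guarantees that $G$ actually attains the value $u$ at the point $G^{-1}(u)$, so that $G(G^{-1}(u)) = u$ for each $u \in (0,1)$. Substituting back yields $\mathbb{P}(U \leq u) = u$, which identifies $U$ as uniform on $[0,1]$, and the argument applies verbatim for every $1 \leq i \leq N$.

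The main obstacle is the possible non-injectivity of $G$: were $G$ merely non-decreasing, both the event equivalence and the identity $G(G^{-1}(u)) = u$ could break down at jumps or on flat stretches. Continuity of $G$ eliminates jumps, which is precisely what rescues the identity $G(G^{-1}(u)) = u$; flat stretches of $G$ correspond to intervals to which $X$ assigns zero probability, so they leave the probability computation unaffected. Handling these two points carefully is the only delicate part of the proof, and once they are dispatched the remainder is routine.
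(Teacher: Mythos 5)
Your proposal is correct and follows essentially the same route as the paper, whose entire proof is the compressed chain $x = G_{X_i}(G_{X_i}^{-1}(x)) = \mathbb{P}(X_i \leq G_{X_i}^{-1}(x)) = \mathbb{P}(G_{X_i}(X_i) \leq x)$ — i.e.\ the probability integral transform computed through the inverse cdf. Your version simply makes explicit the points the paper leaves implicit (the generalized inverse, the null-set treatment of flat stretches, and continuity giving $G(G^{-1}(u)) = u$), which is a more careful writing of the same argument rather than a different one.
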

\begin{proof}
$x = G_{X_i}(G_{X_i}^{-1}(x)) = \mathbb{P}(X_i \leq G_{X_i}^{-1}(x)) = \mathbb{P}(G_{X_i}(X_i) \leq x)$.
\end{proof}
We define the following representation of random vectors  that actually splits the joint behaviours of the marginal variables from their distributional information.
\begin{rmk}[dependence $\oplus$ distribution space projection]\label{GNPR}
Let $\mathcal{T}$ be a mapping which transforms $X = (X_1,\ldots,X_N)$ into its generic representation, an element of $ \mathcal{U}^N\times\mathcal{G}^N$ representing $X$, defined as follow
\begin{eqnarray}
\mathcal{T}:\mathcal{V}^N & \rightarrow & \mathcal{U}^N\times\mathcal{G}^N \\ 
X & \mapsto & (G_X(X),G_X). \nonumber
\end{eqnarray}
\end{rmk}
\begin{prop}\label{bij}
$\mathcal{T}$ is a bijection.
\end{prop}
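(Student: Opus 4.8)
The plan is to prove bijectivity by exhibiting an explicit two-sided inverse rather than verifying injectivity and surjectivity in isolation. The natural candidate is the map $\mathcal{S} : \mathcal{U}^N \times \mathcal{G}^N \to \mathcal{V}^N$ that reconstructs a random vector from uniform marginals and a prescribed list of cdfs via the generalized (quantile) inverses, namely $\mathcal{S}(U, G) = (G_1^{-1}(U_1), \ldots, G_N^{-1}(U_N))$ for $U = (U_1, \ldots, U_N) \in \mathcal{U}^N$ and $G = (G_1, \ldots, G_N) \in \mathcal{G}^N$. First I would check that $\mathcal{S}$ is well defined, i.e. that its output genuinely lands in $\mathcal{V}^N$: by the inverse probability integral transform (Smirnov transform), if $U_i$ is uniform on $[0,1]$ and $G_i \in \mathcal{G}$, then $G_i^{-1}(U_i)$ is a continuous real-valued random variable whose cdf is exactly $G_i$.

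Next I would verify that the two compositions reduce to the identity. For $\mathcal{S} \circ \mathcal{T}$, starting from $X$ we obtain $(G_X(X), G_X)$ and then $\mathcal{S}$ returns $(G_{X_i}^{-1}(G_{X_i}(X_i)))_{1 \le i \le N}$, so I must invoke $G_{X_i}^{-1}(G_{X_i}(x)) = x$, which holds $\mathbb{P}$-almost surely for a continuous random variable. For $\mathcal{T} \circ \mathcal{S}$, starting from $(U, G)$ the well-definedness step above already identifies the marginal cdfs of $\mathcal{S}(U, G)$ as $G$ itself, while the copula-transform component reduces to $(G_i(G_i^{-1}(U_i)))_{1 \le i \le N}$; here I would use $G_i(G_i^{-1}(u)) = u$, valid for every $u \in [0,1]$ precisely because each $G_i$ is (absolutely) continuous. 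Concluding that both composites are the identity then yields that $\mathcal{T}$ is a bijection with inverse $\mathcal{S}$.

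The main obstacle is purely analytic and concerns the two inversion identities when the cdfs are not assumed strictly increasing. The clean identity $G \circ G^{-1} = \mathrm{id}$ on $[0,1]$ relies on continuity of $G$, which is guaranteed here by absolute continuity, so that direction is unproblematic. The delicate one is $G^{-1} \circ G = \mathrm{id}$: on flat stretches of $G$ it fails pointwise and only holds outside a set of probability zero under the law of $X$. I would therefore phrase the reconstruction and the equalities in the almost-sure sense, noting that identifying two random variables agreeing $\mathbb{P}$-almost surely is exactly the equivalence under which $\mathcal{V}$ should be understood, so the argument is unaffected. A secondary point to keep tidy is that the two components of the pair $(G_X(X), G_X)$ carry complementary information — the quantile reconstruction needs the cdf list $G$ to invert the uniformized coordinates — which is precisely why retaining both components makes $\mathcal{T}$ invertible and substantiates the claim that the representation loses no information.
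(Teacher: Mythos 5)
Your proof is correct and is essentially the paper's own argument: the paper likewise inverts $\mathcal{T}$ through the quantile map, exhibiting $G^{-1}(U)$ as the fiber for surjectivity and relying implicitly on the almost-sure reconstruction $G_X^{-1}(G_X(X)) = X$ for injectivity. Your packaging of this as a two-sided inverse $\mathcal{S}$, with the explicit identities $G \circ G^{-1} = \mathrm{id}$ (from continuity of $G$) and $G^{-1} \circ G = \mathrm{id}$ holding \emph{a.s.}, merely spells out the steps the paper's terse proof leaves implicit, and your care about flat stretches of the cdf and the a.s. identification of random variables is a welcome tightening rather than a different route.
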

\begin{proof}
$\mathcal{T}$ is surjective as any element $(U,G) \in \mathcal{U}^N\times\mathcal{G}^N$ has the fiber $G^{-1}(U)$. $\mathcal{T}$ is injective as $(U_{1},G_{1}) = (U_{2},G_{2})$ \textit{a.s.} in  $\mathcal{U}^N\times\mathcal{G}^N$ implies that they have the same cdf $G=G_{1}=G_{2}$ and since $U_{1}=U_{2}$ \textit{a.s.}, it follows that $G^{-1}(U_{1})=G^{-1}(U_{2})$ \textit{a.s.}
\end{proof}
This result replicates the seminal result of copula theory, namely Sklar's theorem \citep{sklar_theo}, which asserts one can split the dependency and distribution apart without losing any information. Fig.~\ref{GNPR_projection} illustrates this projection for $N = 2$.

\begin{figure*}
\vskip 0.2in
\begin{center}
\centerline{\includegraphics[width=0.3\textwidth]{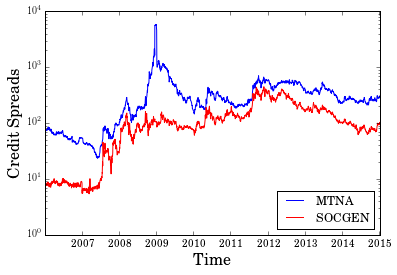}
\includegraphics[width=0.03\textwidth]{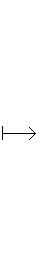}
\includegraphics[width=0.3\textwidth]{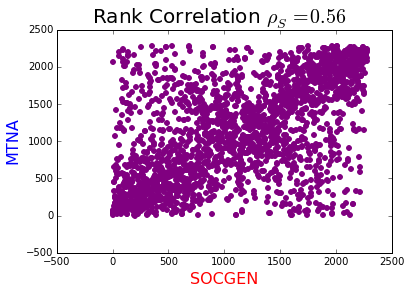} \includegraphics[width=0.03\textwidth]{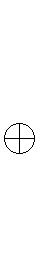} 
\includegraphics[width=0.29\textwidth]{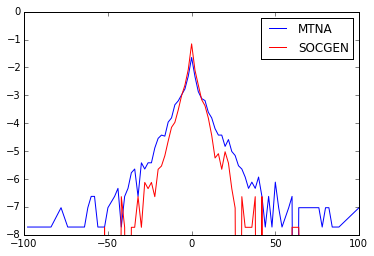}
}
\caption{ArcelorMittal and Soci\'et\'e g\'en\'erale prices ($T$ observations  $(X_1^t,X_2^t)_{t=1}^T$ from $(X_1,X_2) \in \mathcal{V}^2$) are projected on dependence $\oplus$ distribution space; $(G_{X_1}(X_1),G_{X_2}(X_2)) \in \mathcal{U}^2$ encode the dependence between $X_1$ and $X_2$ (a perfect correlation would be represented by a sharp diagonal on the scatterplot); $(G_{X_1},G_{X_2})$ are the margins (their log-densities are displayed above), notice their heavy-tailed exponential distribution (especially for ArcelorMittal).}
\label{GNPR_projection}
\end{center}
\vskip -0.2in
\end{figure*}

\subsection{A distance between random variables}

We leverage the propounded representation to build a suitable yet simple distance between random variables which is invariant under diffeomorphism.

\begin{rmk}[Distance $d_\theta$ between two random variables] Let $\theta \in [0,1]$. Let $(X,Y) \in \mathcal{V}^2$. Let $G = (G_{X},G_{Y})$, where $G_X$ and $G_Y$ are respectively $X$ and $Y$ marginal cdfs. We define the following distance
\begin{eqnarray}\label{distance_theta}
d_{\theta}^{2}(X,Y)=\theta d_{1}^{2}(G_{X}(X),G_{Y}(Y))+(1-\theta)d_{0}^{2}(G_{X},G_{Y}),
\end{eqnarray}
where
\begin{eqnarray}
d_{1}^{2}(G_X(X),G_Y(Y))=3\mathbb{E}[\vert G_{X}(X)-G_{Y}(Y) \vert^{2}],
\end{eqnarray}
and
\begin{eqnarray}
d_{0}^{2}(G_{X},G_{Y})=\frac{1}{2}\int_{\mathbf{R}} \left(\sqrt{\frac{dG_{X}}{d\lambda}}-\sqrt{\frac{dG_{Y}}{d\lambda}}\right)^2 \, \mathrm{d}\lambda.
\end{eqnarray}
\end{rmk}
In particular, $d_0 = \sqrt{1 - BC}$ is the Hellinger distance related to the Bhattacharyya (1/2-Chernoff) coefficient $BC$ upper bounding the Bayes' classification error.
To quantify distribution dissimilarity, $d_0$ is used rather than the more general $\alpha$-Chernoff divergences since it satisfies the properties \ref{propa}, \ref{propb}, \ref{diffeo} (significant for practitioners). In addition, $d_\theta$ can thus be efficiently implemented as a scalar product.
$d_1 = \sqrt{(1 - \rho_S) / 2}$ is a distance correlation measuring statistical dependence between two random variables, where $\rho_S$ is the Spearman's correlation between $X$ and $Y$. Notice that $d_1$ can be expressed by using the copula $C : [0,1]^2 \rightarrow [0,1]$ implicitly defined by the relation $G(X,Y) = C(G_X(X),G_Y(Y))$ since $\rho_S(X,Y) = 12 \int_{0}^{1}\int_{0}^{1} C(u,v) ~\mathrm{d}u ~\mathrm{d}v - 3$ \citep{fredricks2007relationship}.
\begin{exmp}[Distance $d_\theta$ between two Gaussians]
Let $(X,Y)$ be a bivariate Gaussian vector, with $X \sim \mathcal{N}(\mu_X,\sigma_X^2)$, $Y \sim \mathcal{N}(\mu_Y,\sigma_Y^2)$ and $\rho(X,Y) = \rho$.
We obtain, 
$$d_\theta^2(X,Y) = \theta \frac{1 - \rho_S}{2} + (1-\theta) \left(1 - \sqrt{\frac{2 \sigma_X \sigma_Y}{\sigma_X^2 + \sigma_Y^2}}e^{-\frac{1}{4} \frac{(\mu_X - \mu_Y)^2}{\sigma_X^2 + \sigma_Y^2}}\right).$$
\end{exmp}
Remember that for perfectly correlated Gaussians ($\rho = \rho_S = 1$), we want to discriminate on their distributions. We can observe that
\begin{itemize}
\item for $\sigma_X,\sigma_Y \rightarrow +\infty$, then $d_0(X,Y) \rightarrow 0$, it alleviates a main shortcoming of the basic $L_2$ distance which is diverging to $+\infty$ in this case;
\item if $\mu_X \neq \mu_Y$, for $\sigma_X,\sigma_Y \rightarrow 0$, then $d_0(X,Y) \rightarrow 1$, its maximum value, i.e. it means that two Gaussians cannot be more remote from each other than two different Dirac delta functions.
\end{itemize}
We will refer to the use of this distance as the generic parametric representation (GPR) approach.
GPR distance is a fast and good proxy for distance $d_\theta$ when the first two moments $\mu$ and $\sigma$ predominate. Nonetheless, for datasets which contain heavy-tailed distributions, GPR fails to capture this information.

\begin{prop}\label{propa}
Let $\theta \in [0,1]$. The distance $d_\theta$ verifies $0 \leq d_\theta \leq 1$.
\end{prop}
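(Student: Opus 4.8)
The plan is to establish $0 \le d_\theta \le 1$ by bounding each of the two component squared distances $d_1^2$ and $d_0^2$ inside $[0,1]$ separately, and then exploiting the convex-combination structure of $d_\theta^2$. Indeed, since $\theta \in [0,1]$ the weights $\theta$ and $1-\theta$ are nonnegative and sum to $1$, so $d_\theta^2 = \theta d_1^2 + (1-\theta) d_0^2$ is a weighted average of two numbers lying in $[0,1]$; consequently $d_\theta^2 \in [0,1]$ and $d_\theta = \sqrt{d_\theta^2} \in [0,1]$. The whole statement therefore reduces to the two scalar estimates $0 \le d_1^2 \le 1$ and $0 \le d_0^2 \le 1$.

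For the dependence term I would use the identity $d_1^2 = (1-\rho_S)/2$ already recorded above, where $\rho_S$ is the Spearman correlation between $X$ and $Y$. Since $\rho_S$ is nothing but the Pearson correlation of the copula-transformed variables $G_X(X)$ and $G_Y(Y)$, Cauchy--Schwarz gives $\rho_S \in [-1,1]$, whence $d_1^2 = (1-\rho_S)/2 \in [0,1]$. If a self-contained derivation is preferred, set $U = G_X(X)$ and $V = G_Y(Y)$, which are uniform on $[0,1]$ by the uniform-margins property; expanding $d_1^2 = 3\mathbb{E}[(U-V)^2]$ with $\mathbb{E}[U] = \mathbb{E}[V] = 1/2$ and $\mathbb{E}[U^2] = \mathbb{E}[V^2] = 1/3$ recovers $d_1^2 = (1-\rho_S)/2$ and the same conclusion.

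For the distributional term the key is to recognize $d_0$ as the Hellinger distance. Writing $p = dG_X/d\lambda$ and $q = dG_Y/d\lambda$ for the two densities, I would expand the square and use $\int p \, d\lambda = \int q \, d\lambda = 1$ to obtain $d_0^2 = \frac{1}{2}\int_{\mathbb{R}} (p + q - 2\sqrt{pq}) \, d\lambda = 1 - BC$, where $BC = \int_{\mathbb{R}} \sqrt{pq} \, d\lambda$ is the Bhattacharyya coefficient. Nonnegativity $d_0^2 \ge 0$ is immediate because the integrand $(\sqrt{p} - \sqrt{q})^2$ is pointwise nonnegative, and the upper bound $d_0^2 \le 1$ follows from $BC \ge 0$, itself clear since $\sqrt{pq} \ge 0$. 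Combining these two bounds with the convexity remark of the first paragraph closes the argument. There is no genuine obstacle here: the only points demanding care are the bookkeeping in the Hellinger expansion (using that both margins integrate to one) and invoking the correct range of the Spearman correlation; everything else is a direct consequence of the convex-combination form of $d_\theta^2$.
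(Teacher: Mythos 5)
Your proposal is correct and follows essentially the same route as the paper's own proof: bound $d_1$ via $-1 \leq \rho_S \leq 1$, bound $d_0$ via the Hellinger-distance property, and conclude by the convex-combination structure of $d_\theta^2$. The only difference is that you fill in the details the paper cites as known (the Bhattacharyya expansion for the Hellinger bound and the derivation of $d_1^2 = (1-\rho_S)/2$ from uniform margins), and you are slightly more careful in noting that the convex combination applies to the squared distances before taking the square root.
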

\begin{proof} Let $\theta \in [0,1]$. We have
\begin{enumerate}[(i)]
\item $0 \leq d_0 \leq 1$, property of the Hellinger distance;
\item $0 \leq d_1 \leq 1$, since $-1 \leq \rho_S \leq 1$.
\end{enumerate}
Finally, by convex combination, $0 \leq d_{\theta} \leq 1$.
\end{proof}
\begin{prop}\label{propb}
For $0 < \theta < 1$, $d_\theta$ is a metric.
\end{prop}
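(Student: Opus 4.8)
The plan is to verify the four metric axioms on $\mathcal{V}$. Non-negativity already follows from Property~\ref{propa}, and symmetry is immediate from the symmetry of $d_0$ and $d_1$ in their arguments, so the work concentrates on the identity of indiscernibles and the triangle inequality. The guiding idea is that $d_0$ and $d_1$ are each merely pseudometrics on $\mathcal{V}$ (they fail separation, vanishing respectively when the marginals coincide and when the copula transforms coincide), but that their $\ell^2$-type combination $d_\theta=\sqrt{\theta d_1^2+(1-\theta)d_0^2}$ both separates points and inherits the triangle inequality.

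For the identity of indiscernibles, I would first note that because $\theta\in(0,1)$ both weights are strictly positive, so $d_\theta(X,Y)=0$ forces $d_0(G_X,G_Y)=0$ and $d_1(G_X(X),G_Y(Y))=0$ simultaneously. Since $d_0$ is the Hellinger distance, a genuine metric on $\mathcal{G}$, the first equality gives $G_X=G_Y=:G$. Since $d_1$ is a positive multiple of the $L^2(\Omega)$ norm of $G_X(X)-G_Y(Y)$, the second gives $G_X(X)=G_Y(Y)$ almost surely, i.e. $G(X)=G(Y)$ a.s. As $G$ is an absolutely continuous cdf, it is strictly increasing on the support of the law, hence invertible there, so applying $G^{-1}$ yields $X=Y$ a.s.; this is exactly the separation afforded by the bijection $\mathcal{T}$ of Property~\ref{bij}. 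The converse ($X=Y \Rightarrow d_\theta=0$) is clear.

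For the triangle inequality, the key observation is that $d_0$ satisfies it as the Hellinger metric, and $d_1$ satisfies it because $G_X(X),G_Y(Y)\in[0,1]$ lie in $L^2(\Omega)$, so $d_1$ is (up to the constant $\sqrt3$) an honest $L^2$ distance obeying Minkowski. I would then fix three variables $X,Y,Z$ and abbreviate the six pairwise values of $d_0$ and $d_1$. Introducing in $\mathbb{R}^2$ the vectors $u=(\sqrt\theta\,d_1(X,Y),\sqrt{1-\theta}\,d_0(X,Y))$ and $v=(\sqrt\theta\,d_1(Y,Z),\sqrt{1-\theta}\,d_0(Y,Z))$, one has $\|u\|=d_\theta(X,Y)$ and $\|v\|=d_\theta(Y,Z)$ for the Euclidean norm. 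The triangle inequalities for $d_0$ and $d_1$ bound the two coordinates of the ``$X,Z$'' vector componentwise by those of $u+v$; since the map $(s,t)\mapsto\sqrt{\theta s^2+(1-\theta)t^2}$ is nondecreasing in each nonnegative argument, this gives $d_\theta(X,Z)\le\|u+v\|$, and Minkowski's inequality in $\mathbb{R}^2$ finishes with $\|u+v\|\le\|u\|+\|v\|$.

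The main obstacle is precisely this last step: the fact that an $\ell^2$ combination of metrics is again a metric is not formal and genuinely requires combining componentwise monotonicity with the Euclidean triangle inequality, rather than the (false in general) hope that a convex combination of squared metrics is itself a squared metric. A secondary point deserving a line of care is confirming that $d_1$ really does satisfy the triangle inequality as a random-variable distance: this is safe because the copula transforms are bounded, hence square-integrable, placing them in a genuine $L^2$ space where Minkowski applies.
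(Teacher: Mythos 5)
Your proof is correct, and on the separation axiom it follows the same route as the paper: for $\theta\in(0,1)$ both weights are positive, so $d_\theta(X,Y)=0$ forces $d_0(G_X,G_Y)=0$ and $d_1(G_X(X),G_Y(Y))=0$; the Hellinger part gives $G_X=G_Y=:G$, the $L^2$ part gives $G(X)=G(Y)$ \textit{a.s.}, and absolute continuity of $G$ yields $X=Y$ \textit{a.s.} The genuine difference is one of scope. The paper declares separation ``the only non-trivial property to verify'' and never writes down the triangle inequality; you prove it, observing that $d_0$ and $d_1$ are pseudometrics (Hellinger on $\mathcal{G}$, and $\sqrt{3}$ times an $L^2(\Omega)$ distance between bounded copula transforms), and that the weighted $\ell^2$ combination inherits the triangle inequality from componentwise monotonicity of $(s,t)\mapsto\sqrt{\theta s^2+(1-\theta)t^2}$ together with Minkowski in $\mathbb{R}^2$. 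That is exactly the lemma the paper leaves implicit, so your writeup is the more complete one; conversely, the paper supplements the proof with explicit counterexamples at the endpoints ($d_0(U,1-U)=0$ and $d_1(V,2V)=0$), which you only gesture at by calling $d_0,d_1$ pseudometrics. One quibble: your parenthetical asserting it is ``false in general'' that a convex combination of squared metrics is a squared metric is itself incorrect --- your own Minkowski argument proves precisely that general statement for any two (pseudo)metrics; what is false is only the hope that this holds \emph{formally}, without such an argument.
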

\begin{proof} Let $(X,Y) \in \mathcal{V}^2$. For $0 < \theta < 1$, $d_\theta$ is a metric, and the only non-trivial property to verify is the separation axiom
\begin{enumerate}[(i)]
\item $X = Y$ \textit{a.s.} $\Rightarrow d_\theta(X,Y) = 0$
\\ $X = Y$ \textit{a.s.} $\Rightarrow d_1(G_X(X),G_Y(Y)) = d_0(G_X,G_Y) = 0$, and thus $d_\theta(X,Y) = 0$,
\item $d_\theta(X,Y) = 0 \Rightarrow X = Y$ \textit{a.s.} \\
$d_\theta(X,Y) = 0 \Rightarrow d_1(G_X(X),G_Y(Y)) = 0 $ and $d_0(G_X,G_Y) = 0$ $\Rightarrow G_X(X) = G_Y(Y)$ \textit{a.s.} and $G_X = G_Y$. Since $G$ is absolutely continuous, it follows $X = Y$ \textit{a.s.}
\end{enumerate}
Notice that for $\theta \in \{0,1\}$, this property does not hold.
Let $U \in \mathcal{V}$, $U \sim \mathcal{U}[0,1]$. $U \neq 1-U$ but $d_0(U,1-U) = 0$.
Let $V \in \mathcal{V}$. $V \neq 2V$ but $d_1(V,2V) = 0$.
\end{proof}
\begin{prop} \label{diffeo} Diffeomorphism invariance. Let $h : \mathcal{V} \rightarrow \mathcal{V}$ be a diffeomorphism. Let $(X,Y) \in \mathcal{V}^{2}$. Distance $d_{\theta}$ is invariant under diffeomorphism, i.e.
\begin{equation}
d_{\theta}(h(X),h(Y)) = d_{\theta}(X,Y).
\end{equation}
\end{prop}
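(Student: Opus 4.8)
The plan is to exploit the additive split built into the definition of $d_\theta$. Since $d_\theta^2(X,Y) = \theta\, d_1^2(G_X(X),G_Y(Y)) + (1-\theta)\, d_0^2(G_X,G_Y)$, it suffices to prove \emph{separately} that the dependence term $d_1$ and the distribution term $d_0$ are each left unchanged when the pair $(X,Y)$ is replaced by $(h(X),h(Y))$; the result then follows by recombining with the weights $\theta$ and $1-\theta$. Throughout I read $h$ as a diffeomorphism of $\mathbb{R}$ acting pointwise on the values, i.e. $h(X)(\omega) = h(X(\omega))$, and I first record that such an $h$ sends $\mathcal{V}$ into $\mathcal{V}$ with absolutely continuous law, so that both $d_0$ and $d_1$ are well defined on the transformed variables.

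For the dependence term the mechanism is the invariance of the copula transform under monotone reparametrization. For strictly increasing $h$ one has $G_{h(X)}(h(x)) = \mathbb{P}(h(X) \le h(x)) = \mathbb{P}(X \le x) = G_X(x)$, hence $G_{h(X)}(h(X)) = G_X(X)$ almost surely, and likewise for $Y$. Plugging this equality of random variables into $d_1^2 = 3\,\mathbb{E}[\,|G_X(X) - G_Y(Y)|^2\,]$ gives $d_1(G_{h(X)}(h(X)),G_{h(Y)}(h(Y))) = d_1(G_X(X),G_Y(Y))$ at once. Equivalently this is the classical fact that the copula $C$, and therefore $\rho_S = 12\int_0^1\!\int_0^1 C\,\mathrm{d}u\,\mathrm{d}v - 3$, is invariant under an increasing change of each margin, which is exactly what $d_1 = \sqrt{(1-\rho_S)/2}$ depends on.

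For the distribution term I would invoke the change-of-variables formula for densities. Writing $f_X = dG_X/d\lambda$, the density of $h(X)$ is $f_{h(X)}(z) = f_X(h^{-1}(z))\,|(h^{-1})'(z)|$, and similarly for $h(Y)$. Substituting into the Hellinger integral, the common Jacobian factor $|(h^{-1})'(z)|$ pulls out of the squared difference of square roots, and the change of variable $u = h^{-1}(z)$ turns $|(h^{-1})'(z)|\,\mathrm{d}z$ back into $\mathrm{d}u$, leaving precisely $\tfrac{1}{2}\int_{\mathbb{R}} \big(\sqrt{f_X(u)} - \sqrt{f_Y(u)}\big)^2\,\mathrm{d}u = d_0^2(G_X,G_Y)$. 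Hence $d_0(G_{h(X)},G_{h(Y)}) = d_0(G_X,G_Y)$, and combining with the dependence term proves the claim.

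I expect the distribution term to be the only delicate point. The crux is that the Jacobian cancels \emph{inside} the square root because $h$ is applied identically to both variables, so the factor $|(h^{-1})'(z)|$ is shared by $f_{h(X)}$ and $f_{h(Y)}$; this is exactly the property that makes the Hellinger distance (and more generally any $f$-divergence read off densities against a common dominating measure) invariant under a joint diffeomorphic change of coordinates. One genuinely must verify that the diffeomorphism hypothesis guarantees absolutely continuous images so the density formula is valid. Orientation, by contrast, turns out to be harmless: for a decreasing $h$ one gets $G_{h(X)}(h(X)) = 1 - G_X(X)$, and since both arguments are transformed identically the map $t \mapsto 1-t$ cancels in the difference appearing in $d_1$ (and reversing both rank sequences leaves $\rho_S$ unchanged), while the absolute value in the Jacobian already absorbs the sign in $d_0$. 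Thus the equality holds verbatim for either orientation.
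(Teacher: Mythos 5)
Your proposal is correct and follows essentially the same route as the paper: split $d_\theta$ into its $d_1$ and $d_0$ parts, prove invariance of $d_1$ via the identity $G_{h(X)}(h(X)) = G_X(X)$ (with $1-G_X(X)$ in the decreasing case, where the difference cancels the reflection), and prove invariance of $d_0$ by the change-of-variables formula for densities with the Jacobian factoring out of the Hellinger integral. If anything, you are slightly more careful than the paper, which omits the absolute value $\lvert (h^{-1})'\rvert$ in its Jacobian formula and thus implicitly treats only the increasing case for $d_0$.
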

\begin{proof}From definition, we have
\begin{equation}
d_{0}^{2}(h(X),h(Y))=1-\int_{\mathbf{R}}\sqrt{\frac{dG_{h(X)}}{d\lambda}\frac{dG_{h(Y)}}{d\lambda}}d\lambda
\end{equation}
and since
\begin{equation}
\frac{dG_{h(X)}}{d\lambda}(\lambda)=\frac{1}{h'\left(h^{-1}(\lambda) \right)}\frac{dG_{X}}{d\lambda}\left(h^{-1}(\lambda) \right),
\end{equation}
we obtain
\begin{equation} \label{proof_diffeo}
\begin{split}
d_{0}^{2}(h(X),h(Y)) & =
1-\int_{\mathbf{R}}\frac{1}{h'\left(h^{-1}(\lambda) \right)}\sqrt{\frac{dG_{X}}{d\lambda}\frac{dG_{Y}}{d\lambda}}\left(h^{-1}(\lambda) \right)d\lambda \\
& = d_{0}^{2}(X,Y).
\end{split}
\end{equation}
In addition, $\forall x \in \mathbf{R}$, we have
\begin{equation}
\begin{split}
G_{h(X)}\left(h(x)\right)& = \mathbb{P}\left[h(X)\le h(x)\right] \\
 & = \left\{
 		\begin{split} 
 		\mathbb{P}\left[X\le x\right] & = G_{X}(x),\mathrm{~if~}h\mathrm{~increasing} \\
 		1 - \mathbb{P}\left[X\le x\right] & = 1 - G_{X}(x), \mathrm{~otherwise}
 		\end{split}
 	\right. 
\end{split}
\end{equation}
which implies that 
\begin{equation} \label{expect_diffeo}
\begin{split}
d_{1}^{2}\left(h(X),h(Y)\right) & = 3\mathbb{E}\left[|G_{h(X)}(h(X))-G_{h(Y)}(h(Y))|^{2}\right] \\ 
& =3 \mathbb{E}\left[|G_{X}(X)-G_{Y}(Y)|^{2}\right] \\
& = d_{1}^{2}(X,Y).
\end{split}
\end{equation}
Finally, we obtain Property \ref{diffeo} by definition of $d_{\theta}$.
\end{proof}
Thus, $d_\theta$ is invariant under monotonic transformations, a desirable property
as it ensures to be insensitive to scaling (e.g. choice of units) or measurement scheme (e.g. device, mathematical modelling) of the underlying phenomenon.


\subsection{A non-parametric statistical estimation of $d_\theta$}

To apply the propounded distance $d_\theta$ on sampled data without parametric assumptions, we have to define its statistical estimate $\tilde{d}_\theta$ working on realizations of the i.i.d. random variables. Distance $d_1$ working with continuous uniform distributions can be approximated by normalized rank statistics yielding to discrete uniform distributions, in fact coordinates of the multivariate empirical copula \citep{deheuvels1979} which is a non-parametric estimate converging uniformly toward the underlying copula \citep{deheuvels1981}.
Distance $d_0$ working with densities can be approximated by using its discrete form working on histogram density estimates.

\begin{rmk}[The empirical copula transform]
Let $X^T = (X_1^t,\ldots,X_N^t)$, $t=1,\ldots,T$, be $T$ observations from a random vector $X = (X_1,\ldots,X_N)$ with continuous margins $G_X = (G_{X_1}(X_1),\ldots,G_{X_N}(X_N))$. Since one cannot directly obtain the corresponding copula observations $(G_{X_1}(X_1^t),\ldots,G_{X_N}(X_N^t))$ without knowing a priori $G_X$, one can instead estimate the $N$ empirical margins $G_{X_i}^T(x) = \frac{1}{T}\sum_{t=1}^T \mathbf{1}(X_i^t \leq x)$ to obtain $T$ empirical observations $(G_{X_1}^T(X_1^t),\ldots,G_{X_N}^T(X_N^t))$ which are thus related to normalized rank statistics as $G_{X_i}^T(X_i^t) = X_i^{(t)} / T$, where $X_i^{(t)}$ denotes the rank of observation $X_i^t$.
\end{rmk}
\begin{rmk}[Empirical distance]
Let $(X^t)_{t=1}^T$ and $(Y^t)_{t=1}^T$ be $T$ realizations of real-valued random variables $X, Y \in \mathcal{V}$ respectively. 
An empirical distance between realizations of random variables can be defined by
\begin{eqnarray}
\tilde{d}_\theta^2\left((X^t)_{t=1}^T,(Y^t)_{t=1}^T\right) \stackrel{a.s.}{=} \theta \tilde{d}_{1}^2 + (1-\theta) \tilde{d}_{0}^2,
\end{eqnarray}
where
\begin{eqnarray}
\tilde{d}_1^2 = \frac{3}{T(T^2-1)}\sum_{t=1}^T \left(X^{(t)} - Y^{(t)}\right)^2
\end{eqnarray}
and
\begin{eqnarray}
\tilde{d}_0^2=\frac{1}{2}\sum_{k=-\infty}^{+\infty} \left(\sqrt{g_X^h(hk)} - \sqrt{g_Y^h(hk)} \right)^2,
\end{eqnarray}
$h$ being here a suitable bandwidth, and $g_X^h(x) = \frac{1}{T}\sum_{t=1}^T\mathbf{1}( \lfloor \frac{x}{h} \rfloor h\leq X^t < (\lfloor \frac{x}{h} \rfloor +1)h )$ being a density histogram estimating pdf $g_X$ from $(X^t)_{t=1}^T$, $T$ realizations of random variable $X \in \mathcal{V}$.
\end{rmk}
We will refer henceforth to this distance and its use as the generic non-parametric representation (GNPR) approach. To use effectively $d_{\theta}$ and its statistical estimate, it boils down to select a particular value for $\theta$. We suggest here an exploratory approach where one can test (i) distribution information ($\theta = 0$), (ii) dependence information ($\theta = 1$), and (iii) a mix of both information ($\theta = 0.5$).
Ideally, $\theta$ should reflect the balance of dependence and distribution information in the data. 
In a supervised setting, one could select an estimate $\hat{\theta}$ of the right balance $\theta^\star$ optimizing some loss function by techniques such as cross-validation.
Yet, the lack of a clear loss function makes the estimation of $\theta^\star$ difficult in an unsupervised setting. For clustering, many authors \citep{lange2004stability}, \citep{shamir2007cluster}, \citep{shamir2008model}, \citep{meinshausen2010stability} suggest stability as a tool for parameter selection. But, \citep{ben2006sober} warn against its irrelevant use for this purpose. Besides, we already use stability for clustering validation and we want to avoid overfitting.
Finally, we think that finding an optimal trade-off $\theta^\star$ is important for accelerating the rate of convergence toward the underlying ground truth when working with finite and possibly small samples, but ultimately lose its importance asymptotically as soon as $0 < \theta < 1$.

\section{Experiments and applications}\label{exp_section}

\subsection{Synthetic datasets}

We propose the following model for testing the efficiency of the GNPR approach: $N$ time series of length $T$ which are subdivided into $K$ correlation clusters themselves subdivided into $D$ distribution clusters. 

Let $(Y_k)_{k=1}^{K}$, be $K$ i.i.d. random variables. 
Let $p,D \in \mathbf{N}$. Let $N = p K D$. Let $(Z_d^i)_{d=1}^{D}$, $1 \leq i \leq N$, be independent random variables. 
For $1 \leq i \leq N$, we define
\begin{eqnarray}
X_i = \sum_{k=1}^{K}\beta_{k,i} Y_k + \sum_{d=1}^{D}\alpha_{d,i}Z_d^i,
\end{eqnarray} 
where
\begin{enumerate}[a)]
\item $\alpha_{d,i} = 1$, if $i \equiv d-1$ (mod $D$), $0$ otherwise;
\item $\beta \in [0,1]$,
\item $\beta_{k,i} = \beta$, if $\lceil i K / N \rceil = k$, $0$ otherwise.
\end{enumerate}
$(X_i)_{i=1}^N$ are partitioned into $Q = K D$ clusters of $p$ random variables each.
Playing with the model parameters, we define in Table $\ref{tab2}$ some interesting test case datasets to study distribution clustering, dependence clustering or a mix of both.
We use the following notations as a shorthand:
$\mathcal{L} := \mathrm{Laplace}(0,1/\sqrt{2})$ and $\mathcal{S} := \mathrm{t}$-distribution$(3)/\sqrt{3}$.
Since $\mathcal{L}$ and $\mathcal{S}$ have both a mean of $0$ and a variance of $1$, GPR cannot find any difference between them, but GNPR can discriminate on higher moments
as it can be seen in Fig.~\ref{fig:dist_comp_GPR_GNPR}.
\begin{figure}[!t]
\begin{minipage}[b]{0.325\linewidth}
\centering
\includegraphics[width=\textwidth]{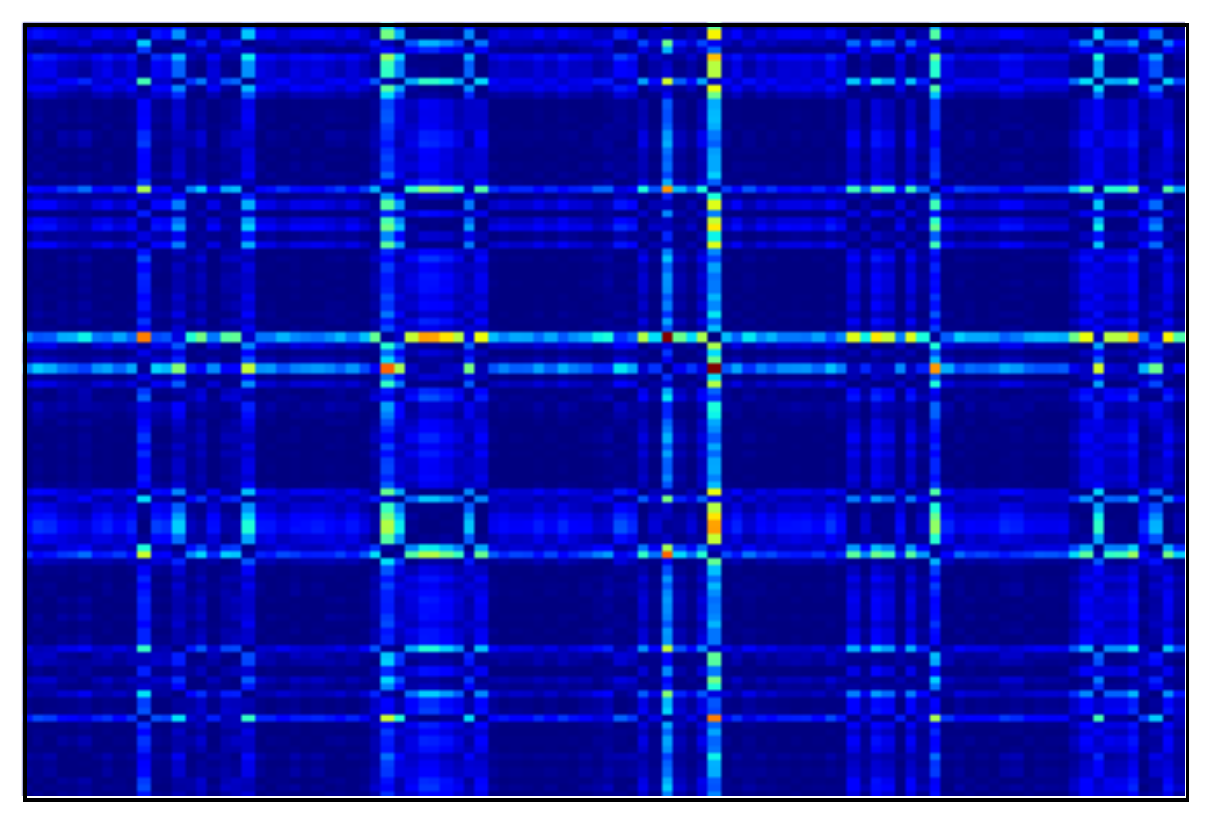}
GPR $\theta = 0$
\end{minipage}
\begin{minipage}[b]{0.325\linewidth}
\centering
\includegraphics[width=\textwidth]{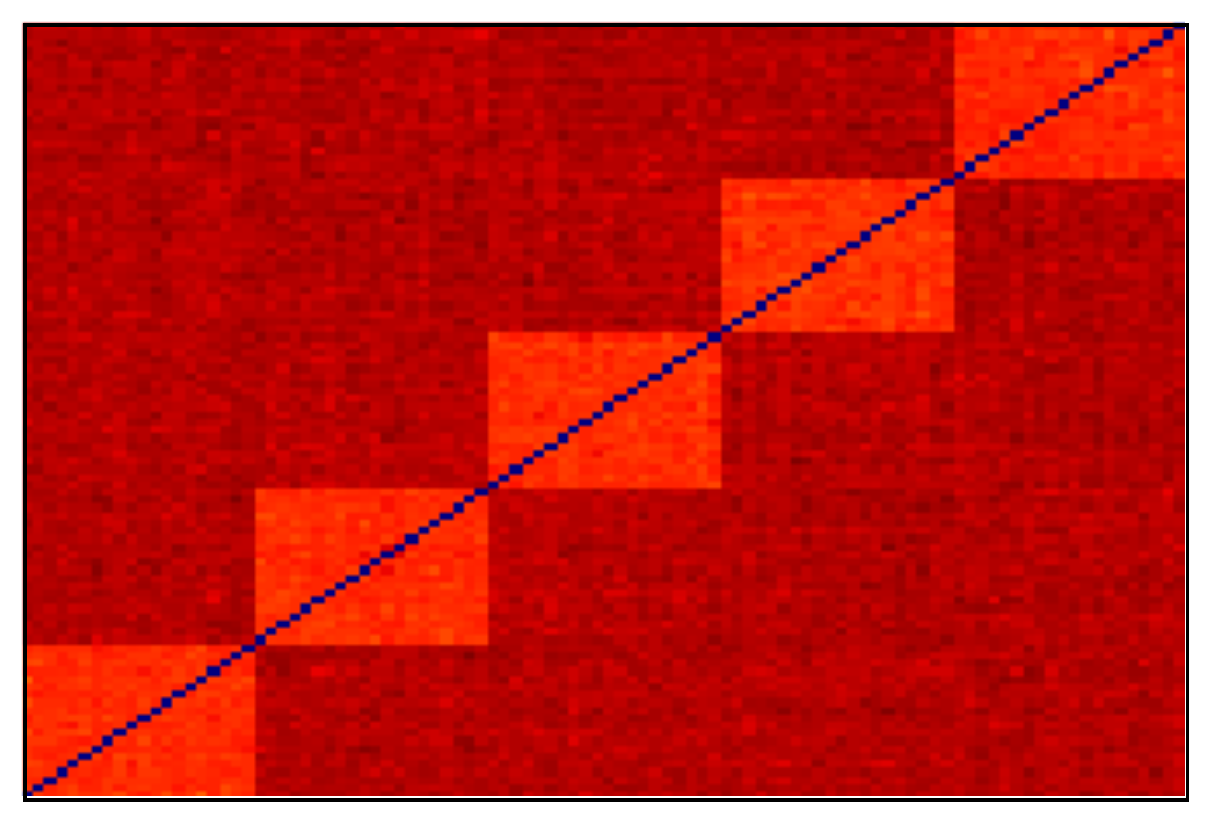}
GPR $\theta = 1$
\end{minipage}
\begin{minipage}[b]{0.325\linewidth}
\centering
\includegraphics[width=\textwidth]{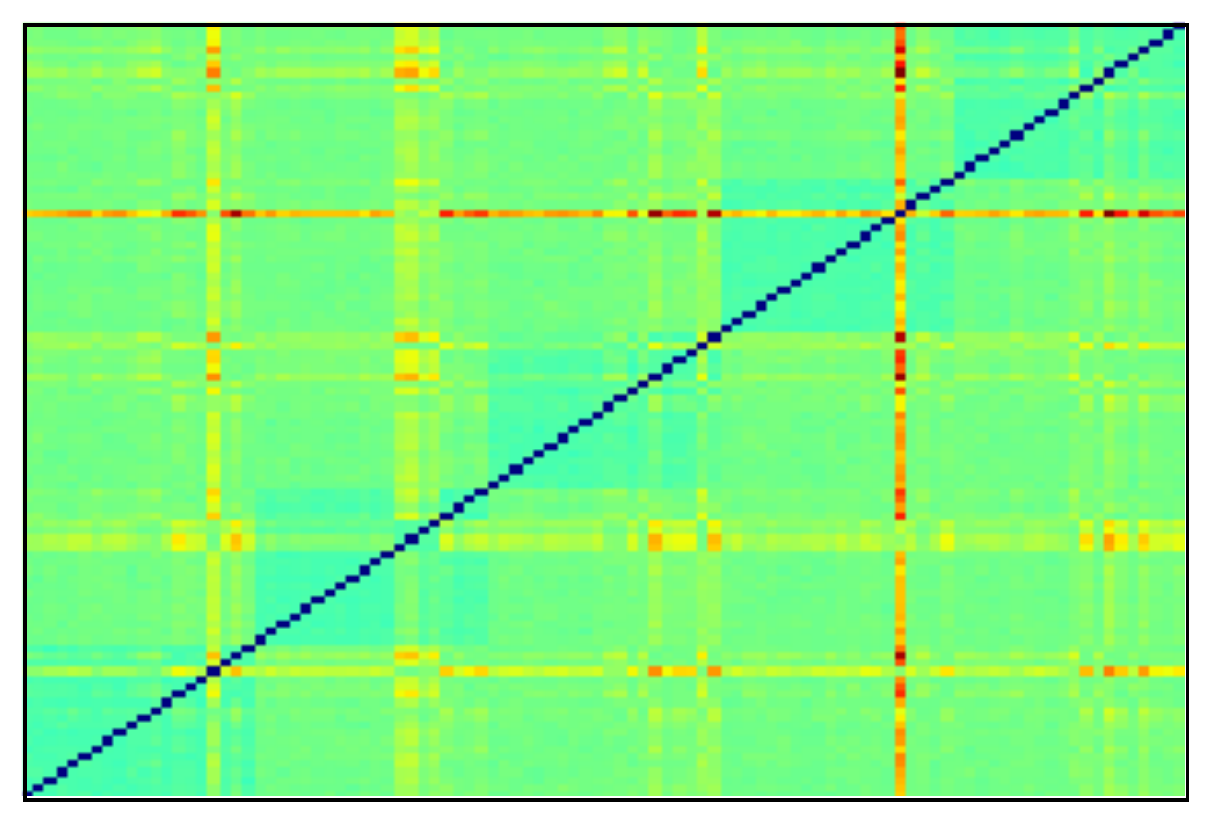}
GPR $\theta = 0.5$
\end{minipage}

\begin{minipage}[b]{0.325\linewidth}
\centering
\includegraphics[width=\textwidth]{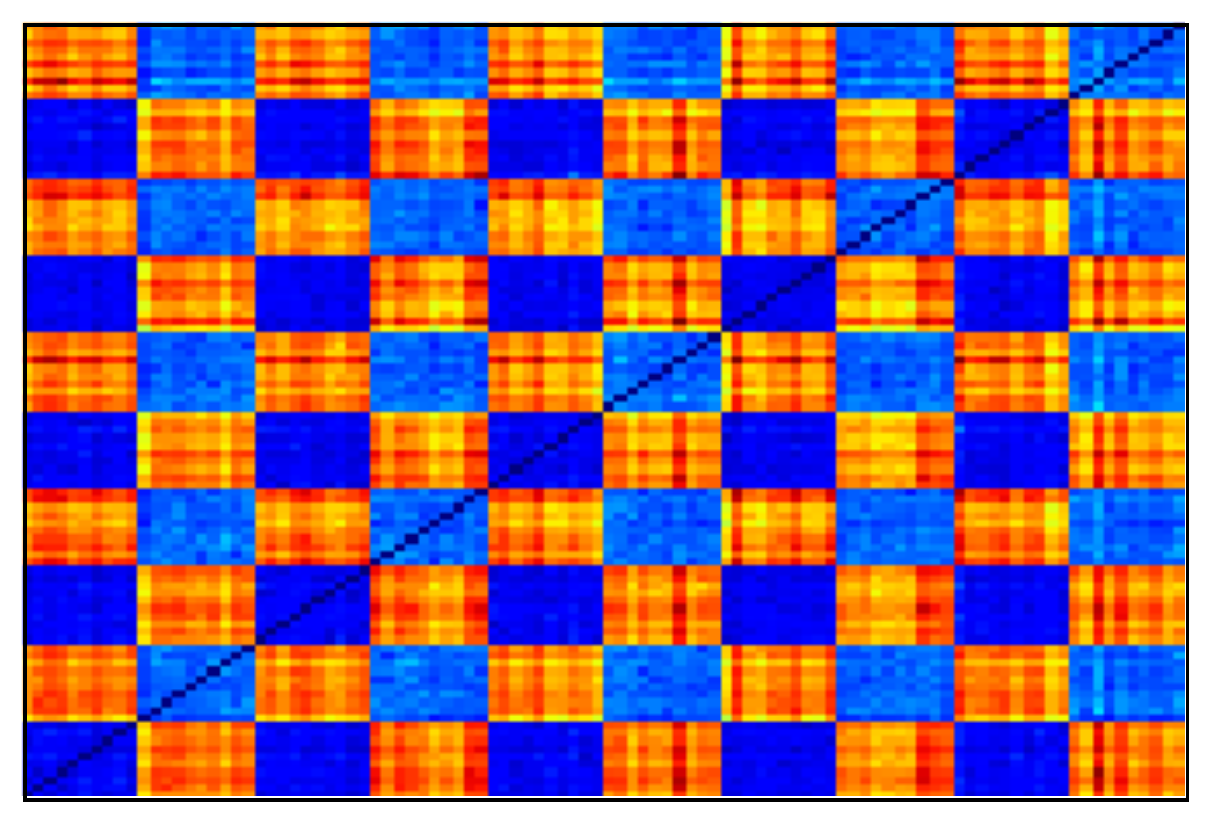}
GNPR $\theta = 0$
\end{minipage}
\begin{minipage}[b]{0.325\linewidth}
\centering
\includegraphics[width=\textwidth]{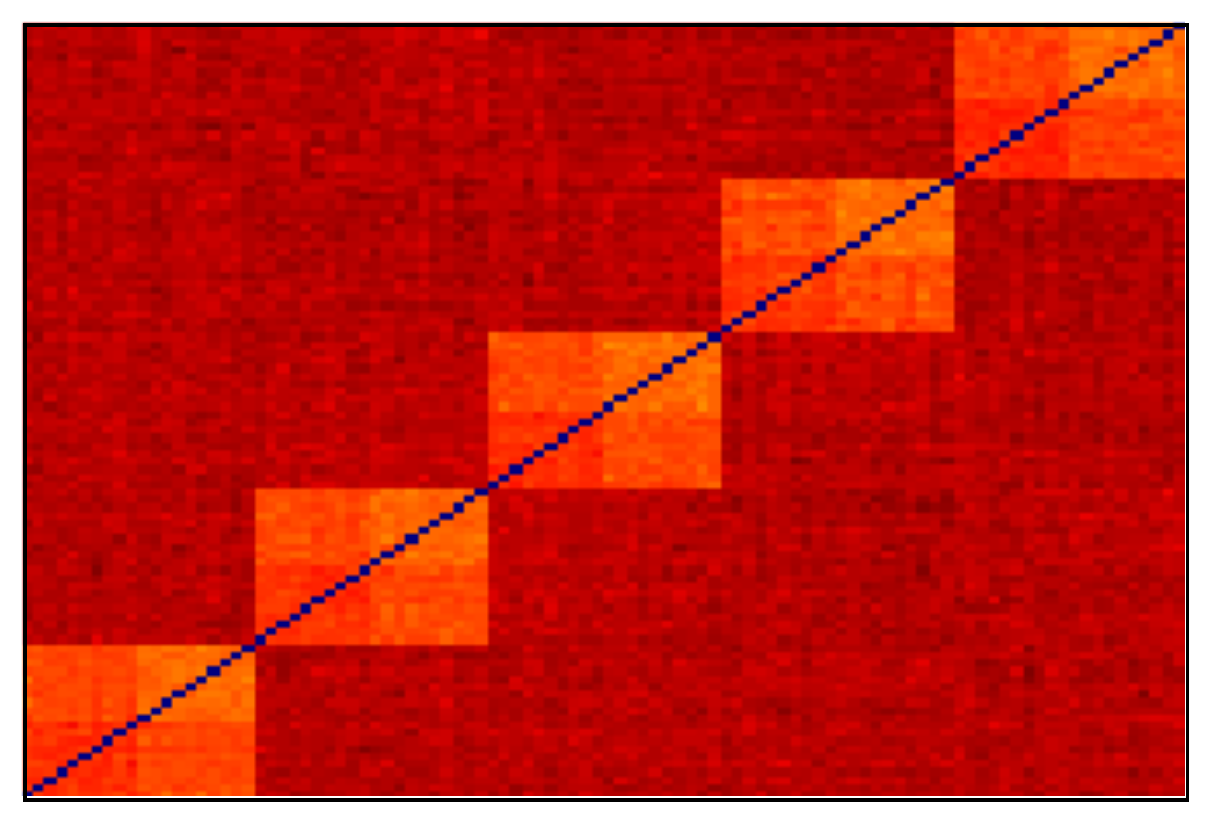}
GNPR $\theta = 1$
\end{minipage}
\begin{minipage}[b]{0.325\linewidth}
\centering
\includegraphics[width=\textwidth]{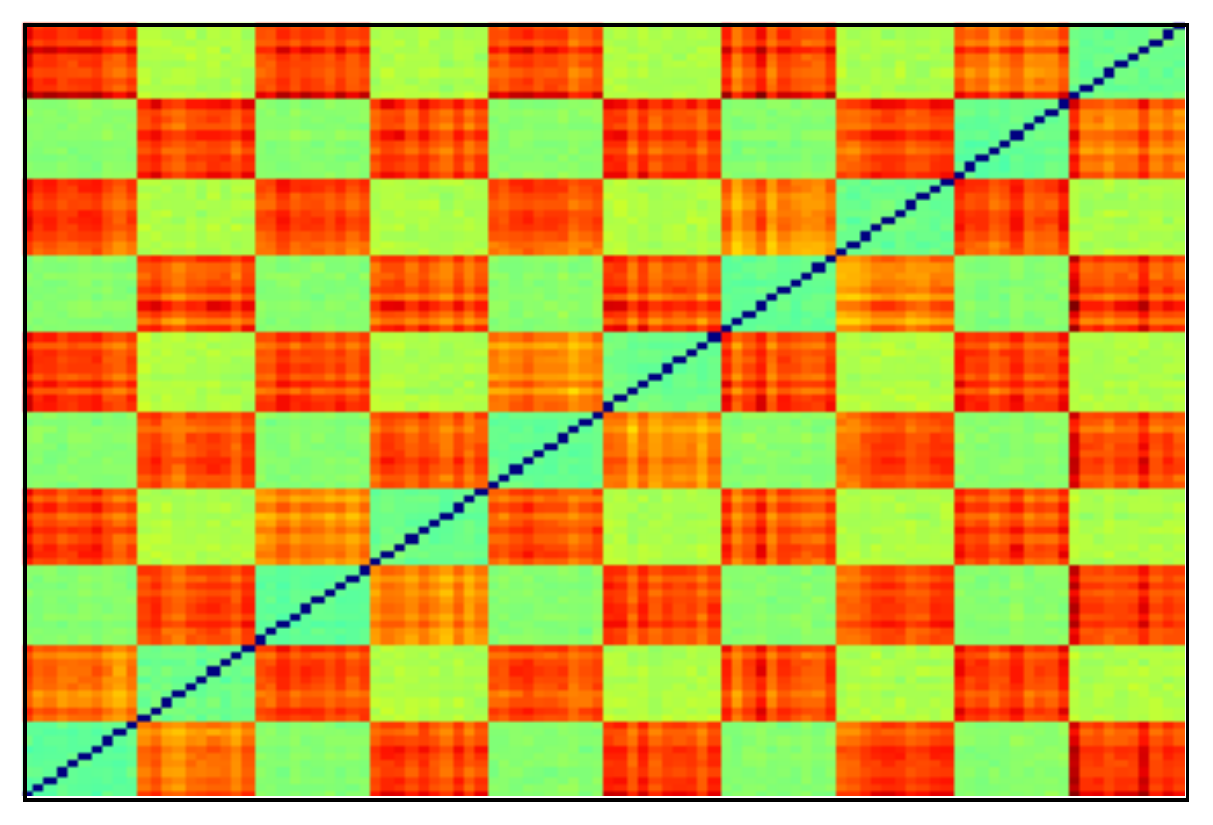}
GNPR $\theta = 0.5$
\end{minipage}
\caption{GPR and GNPR distance matrices. Both GPR and GNPR highlight the 5 correlation clusters ($\theta = 1$), but only GNPR finds the 2 distributions ($\mathcal{S}$ and $\mathcal{L}$) subdividing them ($\theta = 0$). Finally, by combining both information GNPR ($\theta = 0.5$) can highlight the 10 original clusters, while GPR ($\theta = 0.5$) simply adds noise on the correlation distance matrix it recovers.}\label{fig:dist_comp_GPR_GNPR}
\end{figure}

\begin{table*}[!t]
\caption{\label{tab2}Model parameters for some interesting test case datasets}
\centering
\begin{tiny}
\begin{tabular}{|c|c|c|c|c|c|c|c||c|c|c|c|}
\hline
Clustering & Dataset & $N$ & $T$ & $Q$ & $K$ & $\beta$ & $Y_k$ & $Z_1^i$ & $Z_2^i$ & $Z_3^i$ & $Z_4^i$ \\
\hline
\multirow{1}{*}{Distribution}
 &
A & 
200 & 5000 & 4 & 1 & 0 & $\mathcal{N}(0,1)$ & $\mathcal{N}(0,1)$ & $\mathcal{L}$ & $\mathcal{S}$ & $\mathcal{N}(0,2)$
\\
\hline
\multirow{1}{*}{Dependence}
 &
B					  & 
						 200 & 5000 & 10 & 10 & 0.1 & $\mathcal{S}$ & $\mathcal{S}$ & $\mathcal{S}$ & $\mathcal{S}$ & $\mathcal{S}$
\\
\hline
\multirow{2}{*}{Mix}
              & 
C                      & 
                        200 & 5000 & 10 & 5 & 0.1 & $\mathcal{N}(0,1)$ & $\mathcal{N}(0,1)$ & $\mathcal{S}$ & $\mathcal{N}(0,1)$ & $\mathcal{S}$
\\
\cline{2-12}
              & 
G                     & 
                        $32,\ldots,640$ & $10,\ldots,2000$ & 32 & 8 & 0.1 & $\mathcal{N}(0,1)$ & $\mathcal{N}(0,1)$ & $\mathcal{N}(0,2)$ & $\mathcal{L}$ & $\mathcal{S}$
\\
\hline
\multicolumn{9}{@{}l}{}
\end{tabular}
\end{tiny}
\end{table*}

\subsection{Performance of clustering using GNPR}

We empirically show that the GNPR approach achieves better results than others using common distances regardless of the algorithm used on the defined test cases A, B and C described in Table \ref{tab2}. Test case A illustrates datasets containing only distribution information: there are 4 clusters of distributions. Test case B illustrates datasets containing only dependence information: there are 10 clusters of correlation between random variables which are heavy-tailed. Test case C illustrates datasets containing both information: it consists in 10 clusters composed of 5 correlation clusters and each of them is divided into 2 distribution clusters. Using scikit-learn implementation \citep{scikit-learn}, we apply $3$ clustering algorithms with different paradigms: a hierarchical clustering using average linkage (HC-AL), $k$-means++ (KM++), and affinity propagation (AP). Experiment results are reported in Table \ref{tab1}. GNPR performance is due to its proper representation (cf. Fig.~\ref{fig:dist_comp}).
Finally, we have noticed increasing precision of clustering using GNPR as time $T$ grows to infinity, all other parameters being fixed. The number of time series $N$ seems rather uninformative as illustrated in Fig. \ref{fig:NT_conv} (left) which plots ARI \citep{hubert1985comparing} between computed clustering and ground-truth of dataset G as an heatmap for varying $N$ and $T$. Fig. \ref{fig:NT_conv} (right) shows the convergence to the true clustering as a function of $T$.

\begin{figure}[!t]
\begin{minipage}[b]{0.24\linewidth}
\centering
\includegraphics[width=\textwidth]{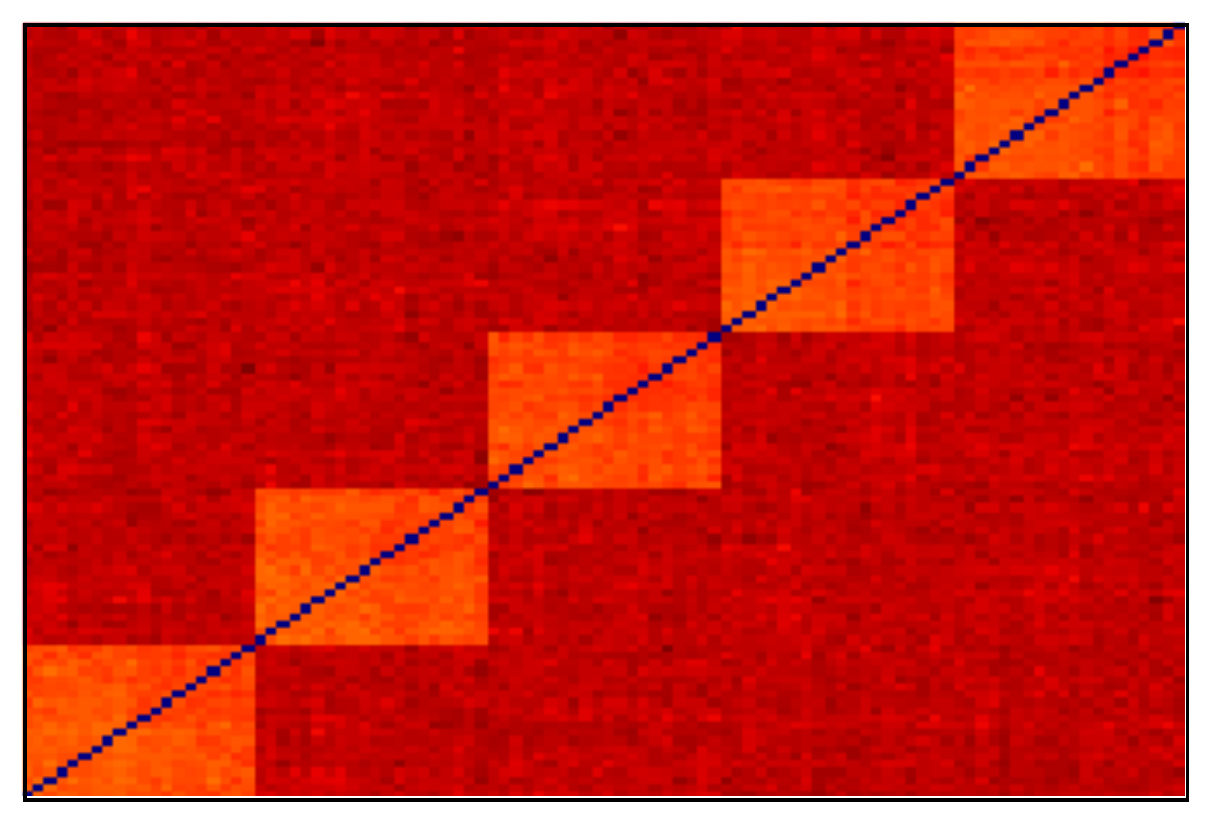}
$(1 - \rho) / 2$
\end{minipage}
\begin{minipage}[b]{0.24\linewidth}
\centering
\includegraphics[width=\textwidth]{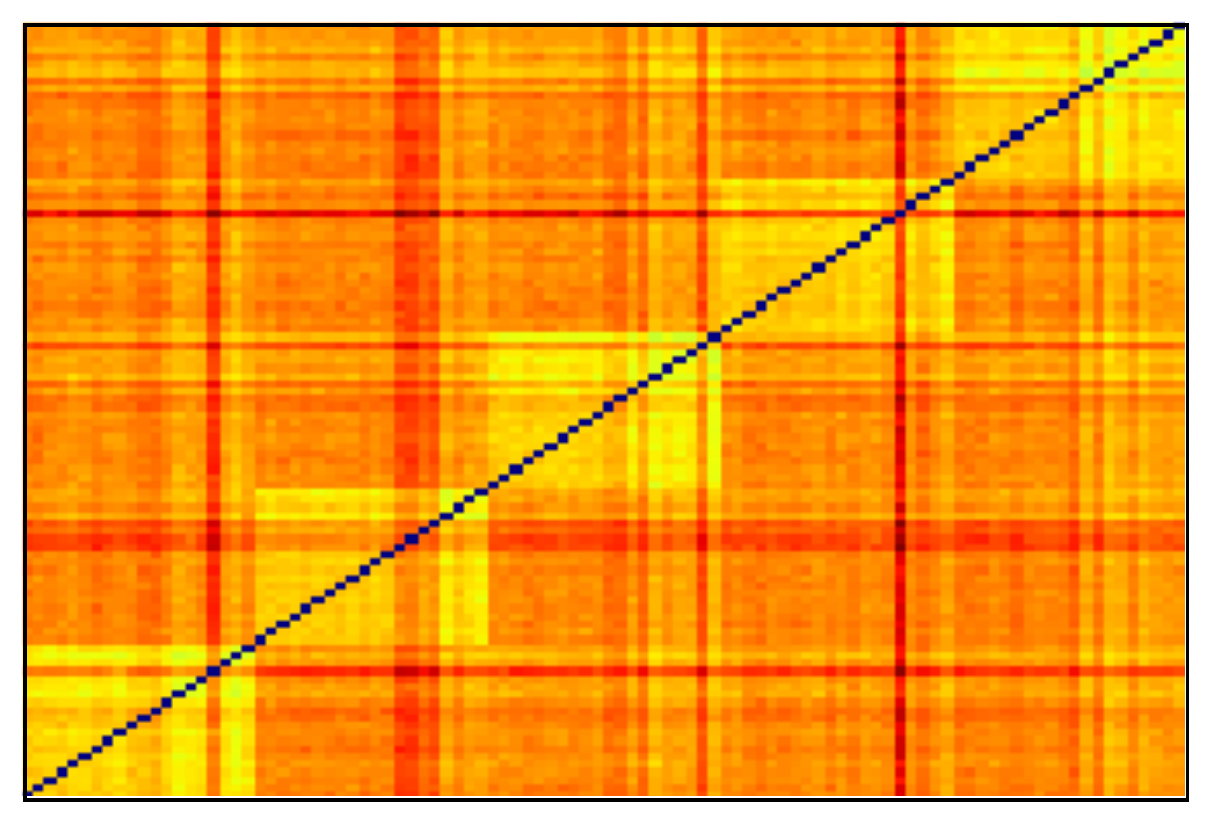}
$L_2$
\end{minipage}
\begin{minipage}[b]{0.24\linewidth}
\centering
\includegraphics[width=\textwidth]{GPR_mix_S}
GPR $\theta = 0.5$
\end{minipage}
\begin{minipage}[b]{0.24\linewidth}
\centering
\includegraphics[width=\textwidth]{GNPR_mix_S}
GNPR $\theta = 0.5$
\end{minipage}
\caption{Distance matrices obtained on dataset C using distance correlation, $L_2$ distance, GPR and GNPR. None but GNPR highlights the 10 original clusters which appear on its diagonal.}\label{fig:dist_comp}
\end{figure}
\begin{table}[!t]
\caption{\label{tab1}Comparison of distance correlation, $L_2$ distance, GPR and GNPR: GNPR approach improves clustering performance}
\centering
\begin{tabular}{|c|l|c|c|c|}
\cline{3-5}
\multicolumn{2}{ c }{} & \multicolumn{3}{ |c| }{Adjusted Rand Index}\\
\hline
Algo. & Distance & A & B & C\\
\hline
\hline
\multirow{8}{*}{HC-AL}    &
$(1-\rho)/2$ & 
 0.00 \tiny{$\pm 0.01$}   & 0.99 \tiny{$\pm 0.01$} & 0.56 \tiny{$\pm 0.01$}
\\
\cline{2-5}
 &
\small{$\mathbb{E}[(X - Y)^2]$}           & 
 0.00 \tiny{$\pm 0.00$}  & 0.09 \tiny{$\pm 0.12$} & 0.55 \tiny{$\pm 0.05$}
\\
\cline{2-5}
 &
\small{GPR} $\theta = 0$                   & 
 0.34 \tiny{$\pm 0.01$} & 0.01 \tiny{$\pm 0.01$} & 0.06 \tiny{$\pm 0.02$}
\\
\cline{2-5}
 &
\small{GPR} $\theta = 1$                   & 
 0.00 \tiny{$\pm 0.01$} & 0.99 \tiny{$\pm 0.01$} & 0.56 \tiny{$\pm 0.01$} 
\\
\cline{2-5}
 &
\small{GPR} $\theta = .5$                   & 
 0.34 \tiny{$\pm 0.01$} & 0.59 \tiny{$\pm 0.12$} & 0.57 \tiny{$\pm 0.01$}
\\
\cline{2-5} \rowcolor{LightCyan}
              & 
\small{GNPR} $\theta=0$                   &
 \textbf{1} & 0.00 \tiny{$\pm 0.00$} & 0.17 \tiny{$\pm 0.00$} 
\\
\cline{2-5} \rowcolor{LightCyan}
              & 
\small{GNPR} $\theta=1$                   &
  0.00 \tiny{$\pm 0.00$} & \textbf{1} & 0.57 \tiny{$\pm 0.00$}  
\\
\cline{2-5} \rowcolor{LightCyan}
              & 
\small{GNPR} $\theta=.5$                   &
 0.99 \tiny{$\pm 0.01$} & 0.25 \tiny{$\pm 0.20$} & \textbf{0.95 \tiny{$\pm 0.08$}}
\\
\hline
\hline
\multirow{8}{*}{KM++}    &
$(1-\rho)/2$ & 
  0.00 \tiny{$\pm 0.01$} & 0.60 \tiny{$\pm 0.20$} & 0.46 \tiny{$\pm 0.05$}
\\
\cline{2-5}
 &
\small{$\mathbb{E}[(X - Y)^2]$}           & 
  0.00 \tiny{$\pm 0.00$} & 0.34 \tiny{$\pm 0.11$} & 0.48 \tiny{$\pm 0.09$}
\\
\cline{2-5}
 &
\small{GPR} $\theta = 0$                   & 
 0.41 \tiny{$\pm 0.03$} & 0.01 \tiny{$\pm 0.01$} & 0.06 \tiny{$\pm 0.02$}
\\
\cline{2-5}
 &
\small{GPR} $\theta = 1$                   & 
 0.00 \tiny{$\pm 0.00$} & 0.45 \tiny{$\pm 0.11$} & 0.43 \tiny{$\pm 0.09$}
\\
\cline{2-5}
 &
\small{GPR} $\theta = .5$                   & 
 0.27 \tiny{$\pm 0.05$} & 0.51 \tiny{$\pm 0.14$} & 0.48 \tiny{$\pm 0.06$}
\\
\cline{2-5} \rowcolor{LightCyan}
              & 
\small{GNPR} $\theta=0$                   &
 \textbf{0.96 \tiny{$\pm 0.11$}} & 0.00 \tiny{$\pm 0.01$} & 0.14 \tiny{$\pm 0.02$}
\\
\cline{2-5} \rowcolor{LightCyan}
              & 
\small{GNPR} $\theta=1$                   &
 0.00 \tiny{$\pm 0.01$} & \textbf{0.65 \tiny{$\pm 0.13$}} & 0.53 \tiny{$\pm 0.02$}
\\
\cline{2-5} \rowcolor{LightCyan}
              & 
\small{GNPR} $\theta=.5$                   &
 0.72 \tiny{$\pm 0.13$} & 0.21 \tiny{$\pm 0.07$} & \textbf{0.64 \tiny{$\pm 0.10$}}
\\
\hline
\hline
\multirow{8}{*}{AP}    &
$(1-\rho)/2$ & 
0.00  \tiny{$\pm 0.00$}  & 0.99 \tiny{$\pm 0.07$} & 0.48 \tiny{$\pm 0.02$}
\\
\cline{2-5}
 &
\small{$\mathbb{E}[(X - Y)^2]$}           & 
0.14 \tiny{$\pm 0.03$} & 0.94 \tiny{$\pm 0.02$}  & 0.59 \tiny{$\pm 0.00$}
\\
\cline{2-5}
 &
\small{GPR} $\theta = 0$                   & 
0.25 \tiny{$\pm 0.08$} & 0.01 \tiny{$\pm 0.01$} & 0.05 \tiny{$\pm 0.02$}
\\
\cline{2-5}
 &
\small{GPR} $\theta = 1$                   & 
0.00 \tiny{$\pm 0.01$} & 0.99 \tiny{$\pm 0.01$}  & 0.48 \tiny{$\pm 0.02$}
\\
\cline{2-5}
 &
\small{GPR} $\theta = .5$                   & 
0.06 \tiny{$\pm 0.00$}  & 0.80 \tiny{$\pm 0.10$} & 0.52 \tiny{$\pm 0.02$}
\\
\cline{2-5} \rowcolor{LightCyan}
              & 
\small{GNPR} $\theta=0$                   &
\textbf{1} & 0.00 \tiny{$\pm 0.00$} & 0.18 \tiny{$\pm 0.01$}
\\
\cline{2-5} \rowcolor{LightCyan}
              & 
\small{GNPR} $\theta=1$                   &
0.00 \tiny{$\pm 0.01$} & \textbf{1} & 0.59 \tiny{$\pm 0.00$}
\\
\cline{2-5} \rowcolor{LightCyan} &
\small{GNPR} $\theta=.5$ &
0.39 \tiny{$\pm 0.02$} & 0.39 \tiny{$\pm 0.11$} & \textbf{1}
\\
\hline
\multicolumn{5}{@{}l}{}
\end{tabular}
\end{table}
\begin{figure}[!t]
\begin{minipage}[b]{0.35\linewidth}
\centering
\includegraphics[width=\textwidth]{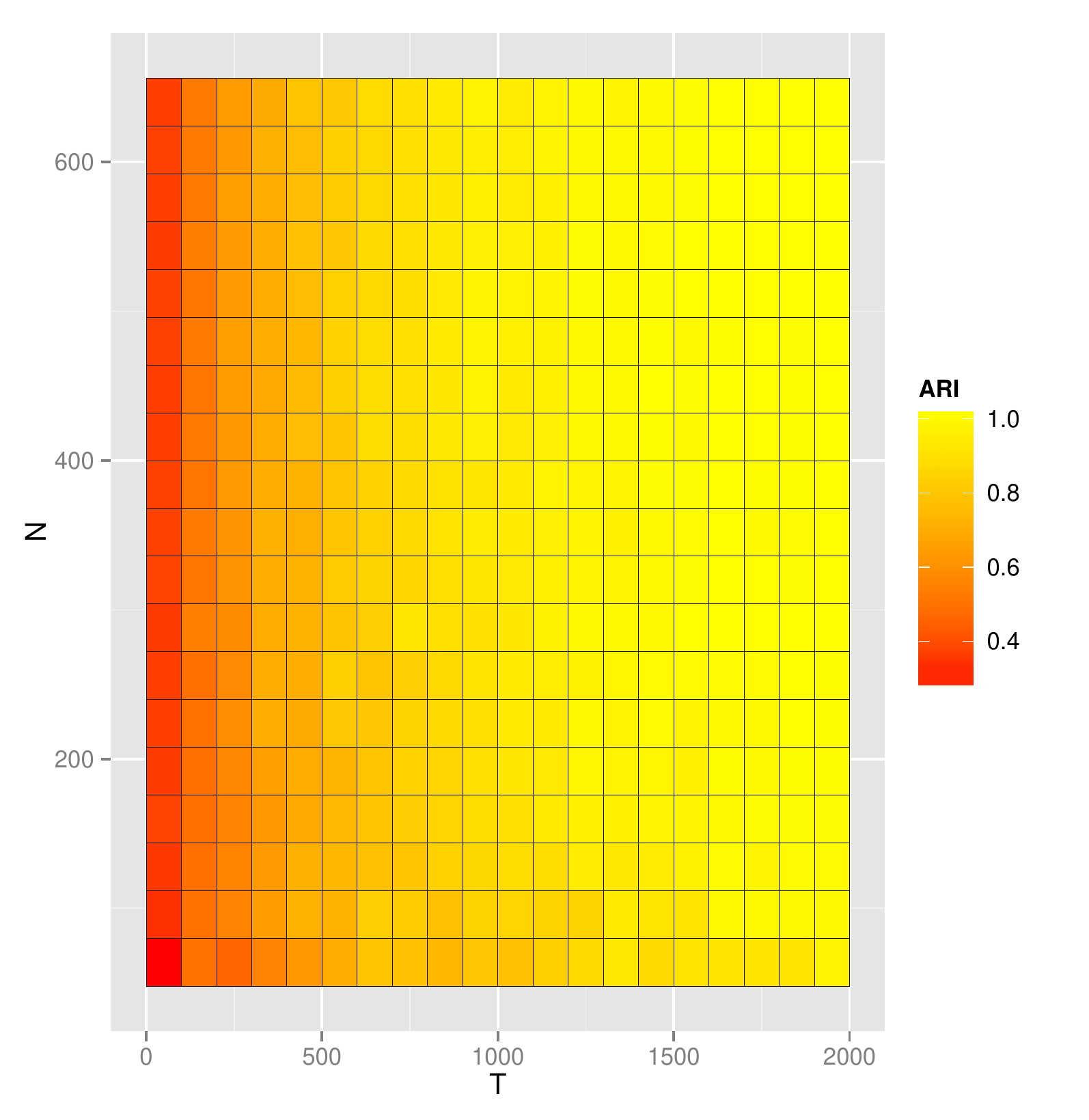}
\end{minipage}
\begin{minipage}[b]{0.65\linewidth}
\centering
\includegraphics[width=\textwidth]{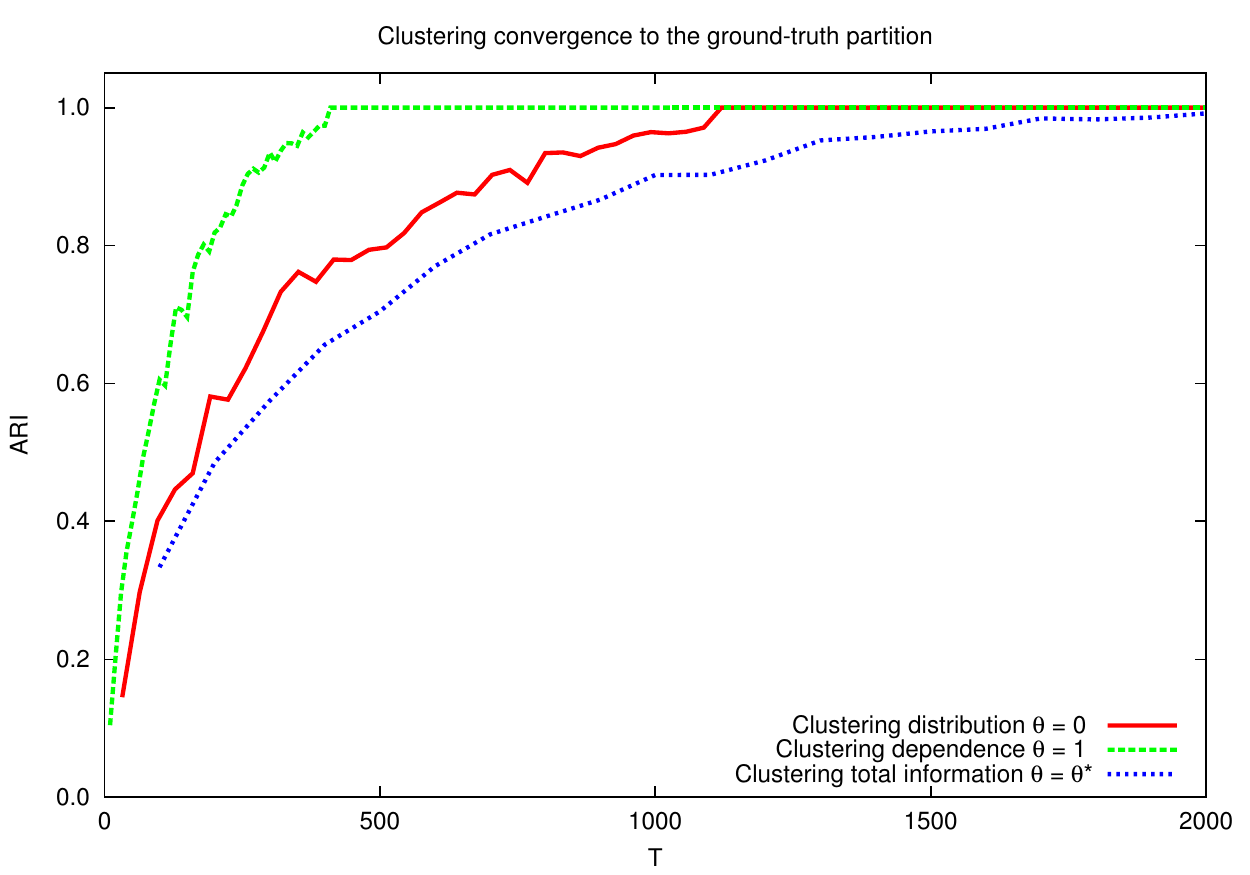}
\end{minipage}
\caption{Empirical consistency of clustering using GNPR as $T \rightarrow \infty$}\label{fig:NT_conv}
\end{figure}

\subsection{Application to financial time series clustering}

\subsubsection{Clustering assets: a (too) strong focus on correlation}

It has been noticed that straightfoward approaches automatically discover sector and industries \citep{gcalda:Mantegna99}. Since detected patterns are blatantly correlation-flavoured, it prompted econophysicists to focus on correlations, hierarchies and networks \citep{Tumminello2010} from the Minimum Spanning Tree and its associated clustering algorithm the Single Linkage to the state of the art \citep{musmeci2014relation} exploiting the topological properties of the Planar Maximally Filtered Graph \citep{lillo16} and its associated algorithm the Directed Bubble Hierarchical Tree (DBHT) technique \citep{citeulike:10441261}.
In practice, econophysicists consider the assets log returns and compute their correlation matrix. The correlation matrix is then filtered thanks to a clustering of the correlation-network \citep{citeulike:5519260} built using similarity and dissimilarity matrices which are derived from the correlation one by convenient \textit{ad hoc} transformations.
Clustering these correlation-based networks \citep{onnela2004clustering} aims at filtering the correlation matrix for standard portfolio optimization \citep{tola2008cluster}.
Yet, adopting similar approaches only allow to retrieve information given by assets co-movements and nothing about the specificities of their returns behaviour, whereas we may also want to distinguish assets by their returns distribution. For example, we are interested to know whether they undergo fat tails, and to which extent.

\subsubsection{Clustering credit default swaps}

We apply the GNPR approach on financial time series, namely daily credit default swap \citep{hull2006options} (CDS) prices. We consider the $N=500$ most actively traded CDS according to DTCC (\url{http://www.dtcc.com/}).
For each CDS, we have $T=2300$ observations corresponding to historical daily prices over the last 9 years, amounting for more than one million data points. Since credit default swaps are traded over-the-counter, closing time for fixing prices can be arbitrarily chosen, here 5pm GMT, i.e. after the London Stock Exchange trading session. This synchronous fixing of CDS prices avoids spurious correlations arising from different closing times. For example, the use of close-to-close stock prices artificially overestimates intra-market correlation and underestimates inter-market dependence since they have different trading hours \citep{martens2001returns}.
These CDS time series can be consulted on the web portal \url{http://www.datagrapple.com/}.

Assuming that CDS prices $(P^t)_{t \geq 1}$ follow random walks, their increments $\Delta P^t = P^{t} - P^{t-1}$ are i.i.d. random variables, and therefore the GNPR approach can be applied to the time series of prices variations, i.e. on data $(\Delta P_1^t,\ldots,\Delta P_N^t)$, $t=1,\ldots,T$. Thus, for aggregating CDS prices time series, we use a clustering algorithm (for instance, Ward's method \citep{Inchoate:Ward63}) based on the GNPR distance matrices between their variations.

Using GNPR $\theta = 0$, we look for distribution information in our CDS dataset. We observe that clustering based on the GNPR $\theta = 0$ distance matrix yields 4 clusters which fit precisely the multi-modal empirical distribution of standard deviations as can be seen in Fig.~\ref{fig:histo_vol}.
For GNPR $\theta = 1$, we display in Fig.~\ref{fig:correl_mat} the rank correlation distance matrix obtained. We can notice its hierarchical structure already described in many papers, e.g. \citep{gcalda:Mantegna99}, \citep{brida2010hierarchical}, focusing on stock markets.
There is information in distribution and in correlation, thus taking into account both information, i.e. using GNPR $\theta = 0.5$, should lead to a meaningful clustering. We verify this claim by using stability as a criterion for validation. Practically, we consider even and odd trading days and perform two independent clusterings, one on even days and the other one on odd days. We should obtain the same partitions. In Fig.~\ref{fig:stability_gnpr_cds}, we display the partitions obtained using the GNPR $\theta = 0.5$ approach next to the ones obtained by applying a $L_2$ distance on prices returns. We find that GNPR clustering is more stable than $L_2$ on returns clustering. Moreover, clusters obtained from GNPR are more homogeneous in size.

\begin{figure}[!t]
\includegraphics[width=\linewidth]{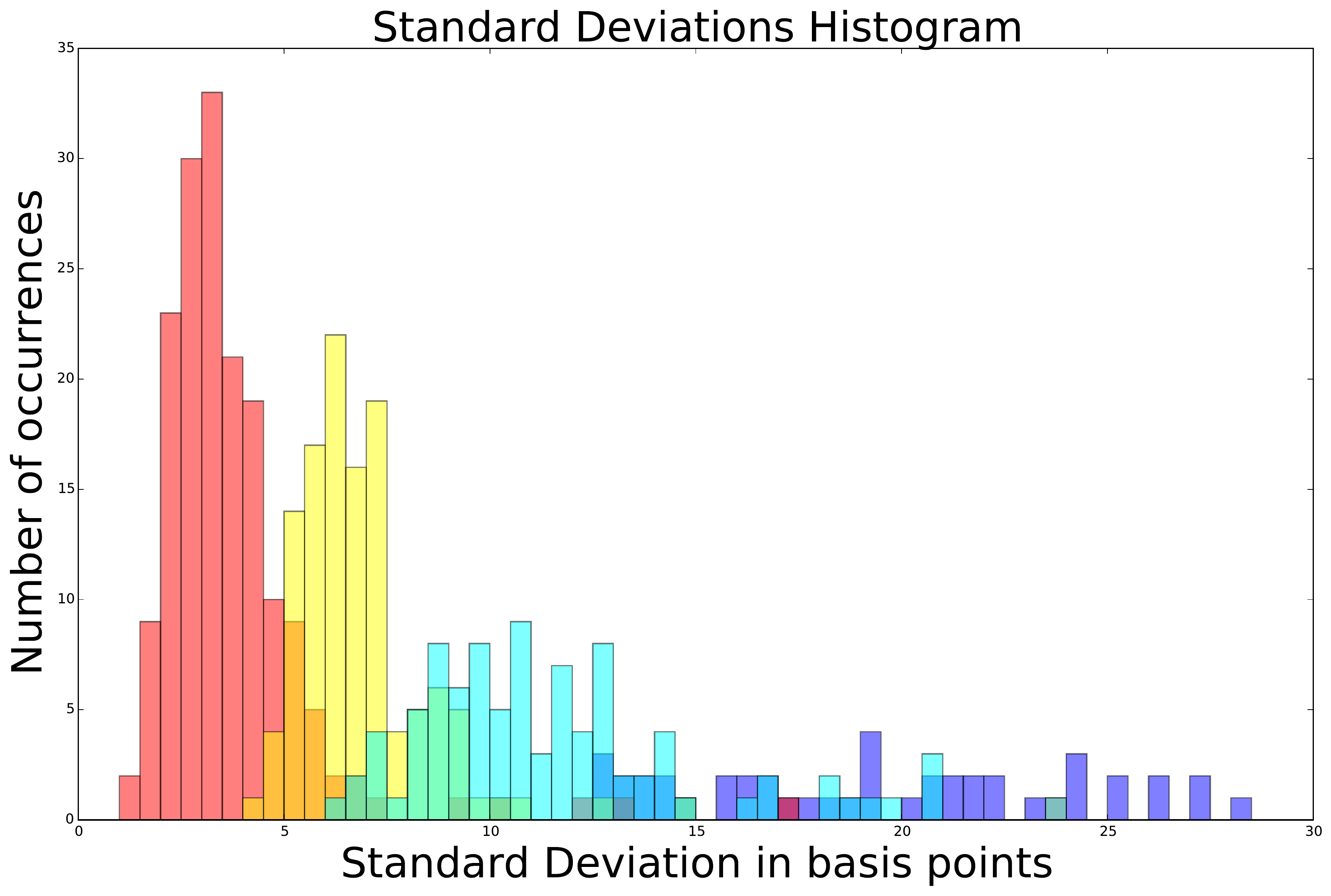}
\caption{Standard Deviation Histogram. The $4$ clusters found using GNPR $\theta = 0$ represented by the 4 colors fit precisely the multi-modal distribution of standard deviations.}\label{fig:histo_vol}
\end{figure}

\begin{figure}[!t]
\includegraphics[width=\linewidth]{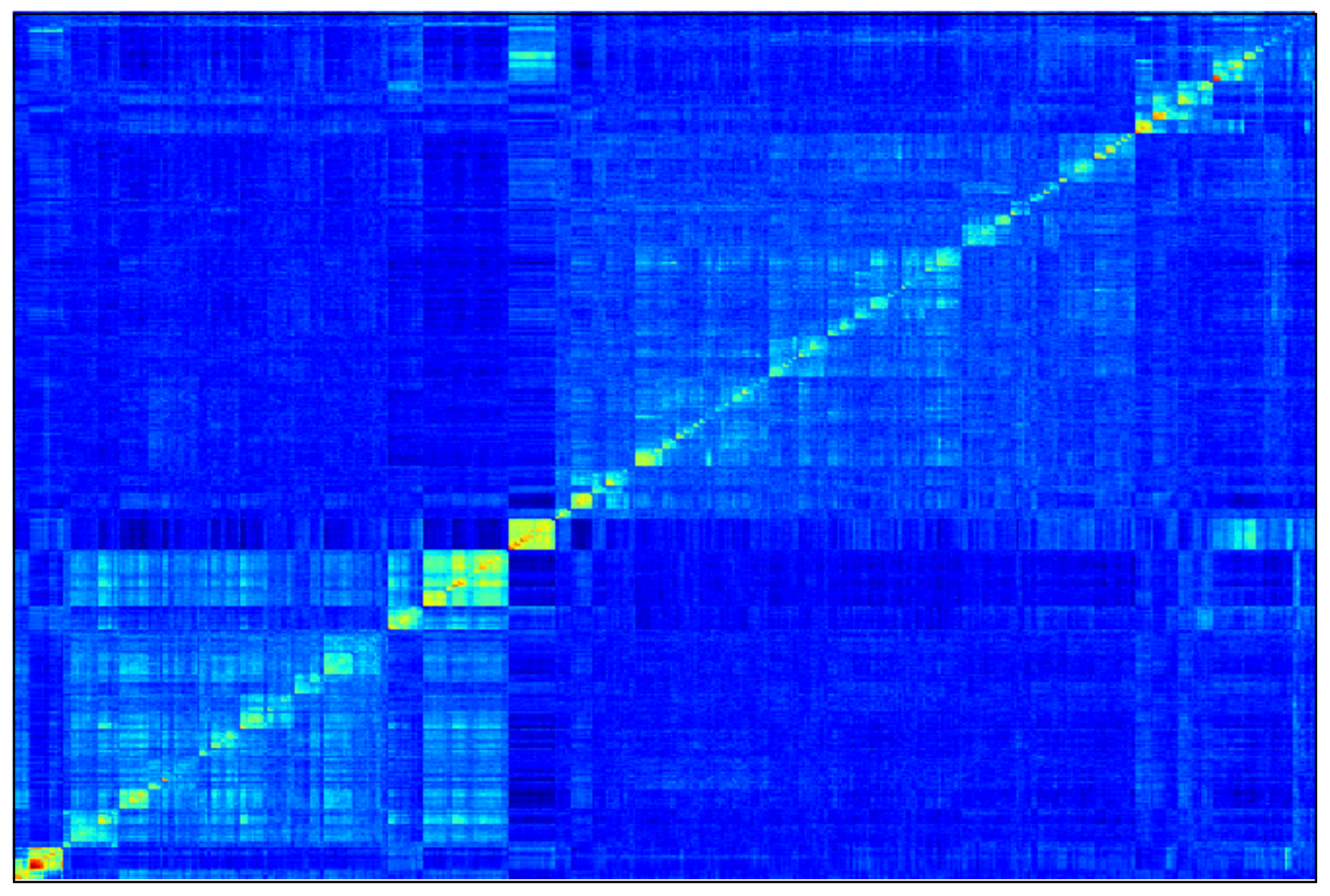}
\caption{Centered Rank Correlation Distance Matrix. GNPR $\theta = 1$ exhibits a hierarchical structure of correlations: first level consists in Europe, Japan and US; second level corresponds to credit quality (investment grade or high yield); third level to industrial sectors.}\label{fig:correl_mat}
\end{figure}

\begin{figure*}[!t]
\includegraphics[width=\linewidth]{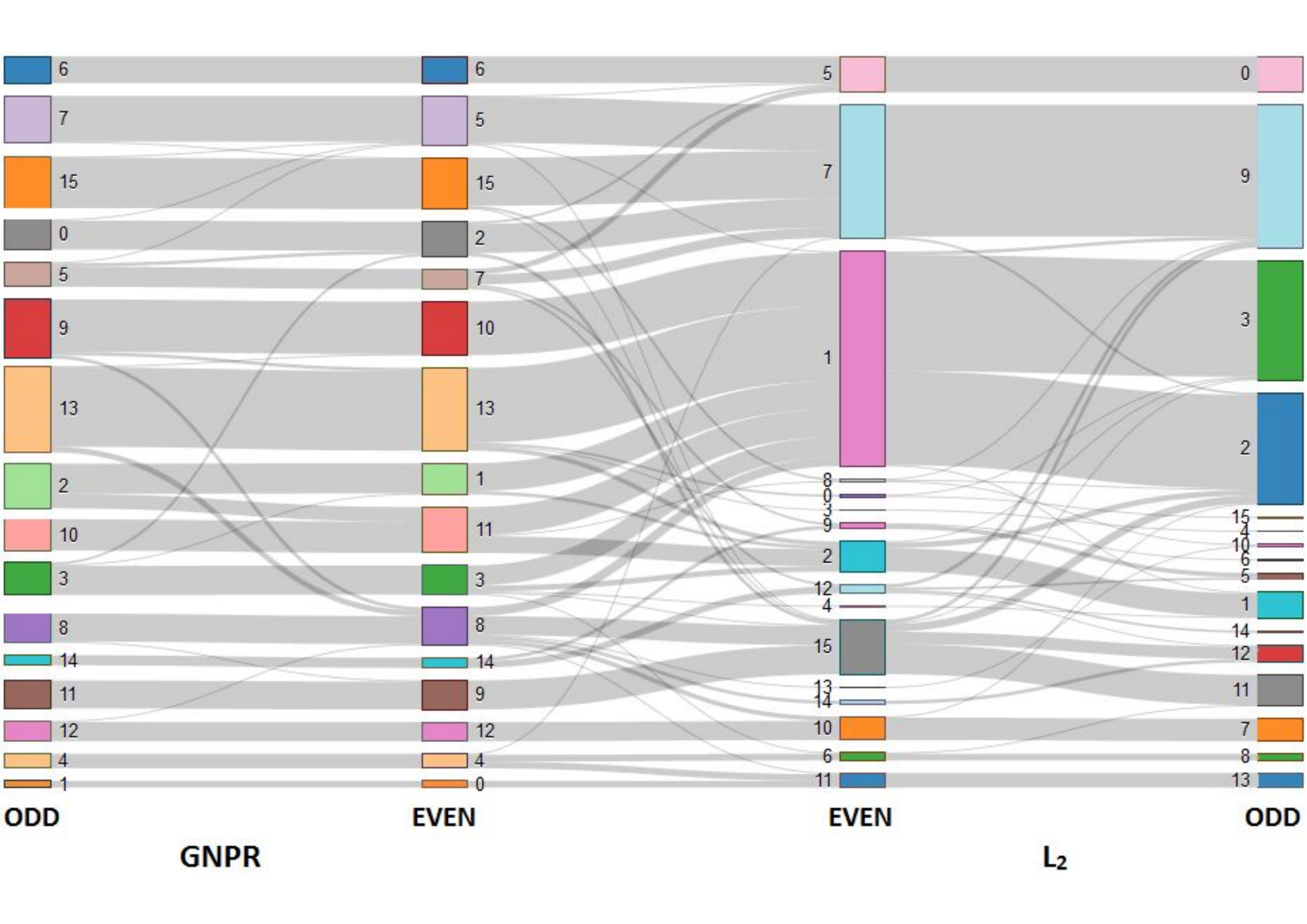}
\caption{Better clustering stability using the GNPR approach: GNPR $\theta = 0.5$ achieves ARI = 0.85; $L_2$ on returns achieves ARI 0.64; The two leftmost partitions built from GNPR on the odd/even trading days sampling look similar: only a few CDS are switching from clusters; The two rightmost partitions built using a $L_2$ on returns display very inhomogeneous (odd-2,3,9 vs. odd-4,14,15) and unstable (even-1 splitting into odd-3 and odd-2) clusters.}\label{fig:stability_gnpr_cds}
\end{figure*}

To conclude on the experiments, we have highlighted through clustering that the presented approach leveraging dependence and distribution information leads to better results: finer partitions on synthetic test cases and more stable partitions on financial time series.

\section{Discussion}\label{discus}

In this paper, we have exposed a novel representation of random variables which could lead to improvements in applying machine learning techniques on time series describing underlying i.i.d. stochastic processes.
We have empirically shown its relevance to deal with random walks and 
financial time series.
We have led a large scale experiment on the credit derivatives market notorious for not having Gaussian but heavy-tailed returns, first results are available on website \url{www.datagrapple.com}.
We also intend to lead such clustering experiments for testing applicability of the method to areas outside finance.
On the theoretical side, we plan to improve the aggregation of the correlation and distribution part by using elements of information geometry theory and to study the consistency property of our method.
\section*{Acknowledgements}

We thank Frank Nielsen, the anonymous reviewers, and our colleagues at Hellebore Capital Management for giving feedback and proofreading the article.

\bibliographystyle{model2-names}

\end{document}